\documentclass{article} 
\usepackage{iclr2027_conference,times}

\usepackage{amsmath,amsfonts,bm}

\def\eqref#1{equation~\ref{#1}}

\def\1{\bm{1}}

\DeclareMathAlphabet{\mathsfit}{\encodingdefault}{\sfdefault}{m}{sl}
\SetMathAlphabet{\mathsfit}{bold}{\encodingdefault}{\sfdefault}{bx}{n}

\usepackage{hyperref}
\usepackage{url}
\usepackage{booktabs}       
\usepackage{multirow}
\usepackage{graphicx}
\usepackage{amsmath, amsfonts, amssymb, amsthm}
\usepackage{algorithm}
\usepackage{algorithmic}

\title{Probabilistic Adversarial Training}

\author{%
  Andi Zhang$^{1}$\thanks{Corresponding Author.} \quad Xingyu Zhao$^{1, 2}$ \quad Siddartha Khastgir$^{1}$ \\
  $^1$WMG, University of Warwick \quad
  $^2$Wuhan University\\
  \texttt{andi.zhang@warwick.ac.uk} \\
}

\theoremstyle{plain}
\newtheorem{theorem}{Theorem}
\newtheorem{proposition}[theorem]{Proposition}
\newtheorem{lemma}[theorem]{Lemma}
\newtheorem{corollary}[theorem]{Corollary}
\theoremstyle{definition}

\theoremstyle{remark}

\newcommand{\pdata}{p_{\mathrm{data}}}

\newcommand{\padv}{p_{\mathrm{adv}}}
\newcommand{\pdis}{p_{\mathrm{dis}}}
\newcommand{\pvic}{p_{\mathrm{vic}}}
\newcommand{\ytar}{y_{\mathrm{tar}}}
\newcommand{\yori}{y_{\mathrm{ori}}}

\newcommand{\xori}{x_{\mathrm{ori}}}

\newcommand{\xadv}{x_{\mathrm{adv}}}

\iclrfinalcopy 
\begin{document}

\maketitle

\begin{abstract}
Building on a probabilistic perspective in which adversarial examples arise from the overlap between a distance-based distribution $p_{\mathrm{dis}}$ and a victim-classifier-induced distribution $p_{\mathrm{vic}}$, we start from a simple intuition: adversarial examples become harder to generate when these two distributions are pushed apart, as their overlap becomes smaller, thereby increasing robustness. This intuition naturally motivates a KL-based robustness objective. We then prove that $\mathrm{KL}(p_{\mathrm{dis}}\|p_{\mathrm{vic}})-\log Z_{\mathrm{vic}}$ is a lower bound on probabilistic robustness (PR), where $Z_{\mathrm{vic}}$ denotes the normalizing constant of $p_{\mathrm{vic}}$. Since PR is generally intractable to compute directly, maximizing this KL-based lower bound provides a tractable surrogate objective for improving PR. We further show that this objective recovers a scaled form of adversarial training, offering a probabilistic interpretation of adversarial training and a principled route to robustness improvement. We call the resulting method \textbf{probabilistic adversarial training}. Experiments show that it consistently improves PR, and ablation studies demonstrate that the induced scaling factor can even enhance the PR of non-probabilistic adversarial training methods.
\end{abstract}

\section{Introduction}
\label{sec:introduction}
Adversarial robustness refers to a model’s ability to maintain correct predictions when its inputs are subjected to small perturbations. This property is important in the era of deep learning, as a substantial body of work \citep{goodfellow2014explaining, kurakin2016adversarial, madry2017towards, moosavi2016deepfool, moosavi2017universal, papernot2016limitations, szegedy2013intriguing} has shown that deep neural networks can be highly vulnerable to adversarial perturbations. 

To better understand and evaluate this vulnerability, \citet{zhang2024constructing} proposed probabilistic adversarial attack. By applying Langevin Dynamics, they demonstrated that adversarial examples can be viewed as samples drawn from an adversarial distribution. This distribution is essentially the product of a ``victim'' distribution $p_\mathrm{vic}$ that encourages misclassification, and a ``distance'' distribution $p_\mathrm{dis}$ that constrains the perturbation magnitude. 

However, their framework is strictly confined to targeted attacks and focuses exclusively on the generation of adversarial examples, leaving a systematic approach to model defense unexplored. On the other side, to formally quantify model security, \citet{webb2018statistical} introduced the concept of probabilistic robustness (PR), defined as the probability that a classifier successfully resists perturbations sampled from a specific distribution. 

In this work, we unify the probabilistic attack perspective \citep{zhang2024constructing} with the probabilistic robustness framework \citep{webb2018statistical}. We theoretically prove that $\mathrm{KL}(p_{\mathrm{dis}}\|p_{\mathrm{vic}})-\log Z_{\mathrm{vic}}$ is a lower bound on probabilistic robustness, where $Z_{\mathrm{vic}}$ denotes the normalizing constant of $p_{\mathrm{vic}}$. Since computing and optimizing PR directly is generally intractable, maximizing this KL-based lower bound provides a tractable surrogate objective to effectively enhance PR. 

This theoretical result aligns with a simple yet profound intuition. In essence, adversarial examples naturally arise from the overlap between the distance distribution $p_{\mathrm{dis}}$ (which confines the perturbation to be close to the original image) and the victim distribution $p_{\mathrm{vic}}$ (which actively drives the model toward misclassification). By maximizing the KL divergence $\mathrm{KL}(p_{\mathrm{dis}}\|p_{\mathrm{vic}})$, we are pushing these two distributions apart. As their overlap shrinks, it becomes inherently more difficult for an attacker to sample an adversarial example that simultaneously satisfies the distance constraint and triggers a misclassification. 

We further show that maximizing this lower bound essentially recovers a scaled form of adversarial training. This provides a probabilistic interpretation of adversarial training itself, and provides a principled route to robustness improvement, which we term Probabilistic Adversarial Training (PAT). Empirically, PAT consistently improves probabilistic robustness. Moreover, our ablation studies reveal an intriguing byproduct: the theoretical scaling factor derived from our framework can plug into and enhance traditional, non-probabilistic defense methods, such as PGD-based adversarial training.

To summarize, our main contributions are as follows:
\begin{itemize}
    \item We formalize the untargeted version of probabilistic adversarial attacks. (Section~\ref{sec:untargeted})
    \item We prove that $\mathrm{KL}(p_{\mathrm{dis}}\|p_{\mathrm{vic}})-\log Z_{\mathrm{vic}}$ is a lower bound on probabilistic robustness (PR). (Section~\ref{sec:lowerbound})
    \item By maximizing this KL-based lower bound, we propose Probabilistic Adversarial Training (PAT), which is theoretically guaranteed to improve PR. Moreover, its importance sampling formulation makes the estimation of the bound tractable (Section~\ref{sec:pat}).
    \item We empirically demonstrate that PAT consistently improves PR, and its derived scaling factor can even enhance non-probabilistic methods like PGD-based adversarial training. (Section~\ref{sec:exps})
\end{itemize}

\section{Preliminaries}

\subsection{Adversarial Attack}
Let \(C: [0,1]^d \rightarrow \mathcal{Y}\) be a classifier with input dimension \(d\) and label space \(\mathcal{Y}\). Let \(\xori \in [0,1]^d\) be a clean (original) image with its corresponding true label \(\yori \in \mathcal{Y}\). Adversarial attacks generally fall into two categories. The goal of an \textbf{untargeted adversarial attack} is to construct an adversarial example \(\xadv\) that simply causes the classifier to misclassify the image, meaning \(C(\xadv) \neq \yori\). Alternatively, given a specific target label \(\ytar \in \mathcal{Y}\) (where \(\ytar \neq \yori\)), a \textbf{targeted adversarial attack} aims to force the classifier to predict that exact label, such that \(C(\xadv) = \ytar\). In both scenarios, the distance between \(\xadv\) and \(\xori\) must remain small. The corresponding optimization problem can be formulated as:
\[
    \min \mathcal{D}(\xori, \xadv)
    \quad \text{subject to} \quad
    C(\xadv) \in \mathcal{T} \quad \text{and} \quad
    \xadv \in [0, 1]^d,
\]
where $\mathcal{T} = \{\ytar\}$ for a targeted attack and $\mathcal{T} = \mathcal{Y} \setminus \{\yori\}$ for an untargeted attack, and $\mathcal{D}$ measures the distance (similarity) between $\xori$ and $\xadv$, typically via an $L_1$, $L_2$, or $L_\infty$ norm. Directly solving this constrained optimization can be challenging. To address this, \citet{szegedy2013intriguing} propose relaxing it into the following optimization problem:
\begin{equation}
\label{eq:finalopti}
    \min_{\xadv \in [0,1]^d} \;c_1\,\mathcal{D}(\xori, \xadv) \;+\; c_2\,L(\xadv)
\end{equation}
where $c_1>0$ and $c_2>0$ are trade-off constants. For a targeted attack, we set $L(\xadv) = f(\xadv, \ytar)$ to minimize the loss with respect to the target label; for an untargeted attack, we set $L(\xadv) = -f(\xadv, \yori)$ to maximize the loss with respect to the true label, thereby pushing the prediction away from the original class. In \cite{szegedy2013intriguing}'s work, $f$ is taken to be the cross-entropy loss\footnote{When we say that \(f(x,y)\) is the cross-entropy loss, we mean the cross-entropy between the Dirac categorical distribution concentrated at label \(y\) and the categorical distribution output by the neural network given input \(x\).}; \citet{carlini2017towards} present additional choices for $f$.

\subsection{Probabilistic Adversarial Attack}
\label{sec:probattack}
\citet{zhang2024constructing} derive a probabilistic perspective on adversarial attacks by applying Langevin Dynamics as an optimizer for \eqref{eq:finalopti}. Notably, their framework focuses exclusively on targeted attack. Under this specific setting (where \(L(\xadv) = f(\xadv, \ytar)\)), they introduce the adversarial distribution:
\[
 \padv(\xadv ; \xori, \ytar) \propto \pvic(\xadv ; \ytar) \, \pdis(\xadv ; \xori),
\]
where \(\pvic(\xadv ; \ytar) \propto \exp\bigl(-c_2 \, f(\xadv, \ytar)\bigr)\) is the ``victim'' distribution emphasizing misclassification toward the target label \(\ytar\), and \(\pdis(\xadv ; \xori) \propto \exp\bigl(-c_1\,\mathcal{D}(\xori, \xadv)\bigr)\) is the ``distance'' distribution around \(\xori\). This formulation leverages the fact that Langevin Dynamics converges to the corresponding Gibbs distribution \citep{lamperski2021projected}, thereby providing a probabilistic interpretation of adversarial examples.

This probabilistic perspective aligns with traditional geometry-based adversarial attacks. For example, if \(\mathcal{D}\) is the \(L_1\) norm, then \(\pdis(\xadv ; \xori) \propto \exp(-\lVert \xadv - \xori\rVert_{1})\) takes the form of a Laplace distribution; if \(\mathcal{D}\) is the squared \(L_2\) norm, then \(\pdis(\xadv ; \xori) \propto \exp(-\lVert \xadv - \xori\rVert_{2}^2)\) is a Gaussian distribution; if \(\mathcal{D}\) is \(L_\infty\) norm, then $\pdis$ is an \(L_\infty\)-spherical distribution \citep{iglesias1998characterizations}.

\subsection{Probabilistic Robustness}
\label{sec:probrobust}
\citet{webb2018statistical} introduced a statistical framework to assess neural network robustness by evaluating the probability of an attack's success. Let $p$ denote the probability density function of perturbations around a clean input $x_{\mathrm{ori}}$. To quantify misclassification, let $z(x) \in \mathbb{R}^{|\mathcal{Y}|}$ denote the pre-softmax logits output by the classifier. For an untargeted attack, the margin function $s(\cdot)$ is formally defined as the logit margin violation:
\begin{equation*}
    s(x) = \max_{y \neq y_{\mathrm{ori}}} \left[ z_y(x) - z_{y_{\mathrm{ori}}}(x) \right].
\end{equation*}
The event $s(X) \ge 0$ strictly indicates that at least one incorrect class logit matches or exceeds the true class logit, signifying a successful adversarial attack (i.e., the classifier is fooled). 

Consequently, the attack success probability, or \textbf{probabilistic vulnerability}, is defined as:
\begin{equation*}
    \mathcal{I}[p, s] = \mathbb{P}_{X \sim p}(s(X) \ge 0) = \int \mathbb{I}_{\{s(x) \ge 0\}} p(x) dx,
\end{equation*}
where $\mathbb{I}$ is the indicator function. 

While directly minimizing $\mathcal{I}[p,s]$ reduces vulnerability, framing robustness as an objective to be maximized provides a more consistent narrative for our subsequent theoretical derivations. Therefore, throughout this paper, we formally define \textbf{Probabilistic Robustness} (PR) as the complementary probability of an attack failing:
\begin{equation*}
    \mathrm{PR}(p, s) := 1 - \mathcal{I}[p, s] = \mathbb{P}_{X \sim p}(s(X) < 0).
\end{equation*}
Under this definition, enhancing the intrinsic security of a model corresponds directly to maximizing the value of PR.

\subsection{Adversarial Training}

Adversarial training (AT) is a widely adopted empirical method to defend against adversarial attack described above \citep{madry2017towards}. The core idea of AT is to augment the training data with adversarial examples during the learning process. Formally, let $\theta$ denote the parameters of the classifier, and let $\pdata$ be the underlying data distribution. Adversarial training formulates the robustness objective as a min-max optimization problem:
\begin{equation*}
    \min_{\theta} \mathbb{E}_{(\xori, \yori) \sim \pdata} \left[ \max_{x_{\mathrm{adv}} \in \mathcal{S}(\xori)} f(x_{\mathrm{adv}}, \yori; \theta) \right],
\end{equation*}
where $\mathcal{S}(\xori) = \{x_{\mathrm{adv}} \in [0,1]^{d} : \mathcal{D}(\xori, \xadv) \le \epsilon\}$ defines the set of allowed adversarial perturbations bounded by a small budget $\epsilon$ (e.g., an $L_{\infty}$ or $L_{2}$ norm ball). 

\section{An Untargeted Version of Probabilistic Adversarial Attack}
\label{sec:untargeted}
As introduced in Section~\ref{sec:probattack}, an untargeted adversarial attack can be formulated as the following optimization problem:
\begin{equation*}
    \min_{\xadv \in [0,1]^d} \; c_1\,\mathcal{D}(\xori, \xadv) \;-\; c_2 \,f(\xadv, \yori)
\end{equation*}
Similar to the work of \citet{zhang2024constructing}, applying Langevin dynamics to this optimization problem yields convergence to a Gibbs distribution concentrated around the solutions:
\begin{equation*}
 \padv(\xadv ; \xori, \yori) \propto \pvic(\xadv ; \yori) \, \pdis(\xadv ; \xori),
\end{equation*}
where \(\pvic(\xadv ; \yori) \propto \exp\bigl(c_2 \, f(\xadv, \yori)\bigr)\) is the ``victim'' distribution and \(\pdis(\xadv ; \xori) \propto \exp\bigl(-c_1\,\mathcal{D}(\xori, \xadv)\bigr)\) is the ``distance'' distribution around \(\xori\). For typical choices of \(\mathcal{D}\), it is clear that \(\pdis\) is well-defined \citep{zhang2024constructing}. The following proposition shows that \(\pvic\) is also well-defined in the untargeted setting:

\begin{proposition}[Well-definedness of the untargeted victim distribution]
Let \(K=[0,1]^d\) and let \(f(\cdot,y_{\mathrm{ori}}):K\to\mathbb{R}\) be continuous. For any \(c>0\), define
\[
    p_{\mathrm{vic}}(x;y_{\mathrm{ori}})
    =
    \frac{\exp\big(c f(x,y_{\mathrm{ori}})\big)}
    {\int_K \exp\big(c f(u,y_{\mathrm{ori}})\big)\,du},
    \qquad x\in K.
\]
Then \(p_{\mathrm{vic}}(\cdot;y_{\mathrm{ori}})\) is a well-defined probability density on \(K\).
\end{proposition}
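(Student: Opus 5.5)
The plan is to check the two requirements for a probability density directly. The function must be nonnegative and measurable. The normalizing constant \(Z=\int_K \exp\big(c f(u,y_{\mathrm{ori}})\big)\,du\) must be finite and strictly positive, so that the quotient exists and integrates to one.

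First I will use compactness. The cube \(K=[0,1]^d\) is closed and bounded, so it is compact by Heine--Borel. Because \(f(\cdot,y_{\mathrm{ori}})\) is continuous on \(K\), the extreme value theorem gives constants \(m\le M\) with \(m\le f(x,y_{\mathrm{ori}})\le M\) for all \(x\in K\). Since \(c>0\) and \(\exp\) is continuous and increasing, the integrand \(g(x)=\exp\big(cf(x,y_{\mathrm{ori}})\big)\) is continuous on \(K\), hence Borel measurable. It also satisfies
\[
0<e^{cm}\le g(x)\le e^{cM}<\infty \quad \text{for all } x\in K .
\]

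Next I integrate these bounds over \(K\), whose Lebesgue measure is \(1\). This gives \(e^{cm}\le Z\le e^{cM}\), so \(0<Z<\infty\). Consequently \(p_{\mathrm{vic}}(x;y_{\mathrm{ori}})=g(x)/Z\) is a well-defined, finite, strictly positive, measurable function on \(K\). By linearity of the integral, \(\int_K p_{\mathrm{vic}}(x;y_{\mathrm{ori}})\,dx=Z/Z=1\). Together these facts show that \(p_{\mathrm{vic}}\) is a probability density on \(K\).

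None of these steps is a real obstacle. The one point that needs care is the contrast with the targeted case. There the exponent is \(-c_2 f\) with \(f\ge 0\), so integrability follows from the cross-entropy being bounded below. Here the exponent is \(+cf\), and the cross-entropy is unbounded above on \(\mathbb{R}^d\), so \(Z\) could diverge on an unbounded domain. This is why the argument relies on restricting to the compact set \(K\) and on the continuity of \(f\), for example because the network output is continuous and the softmax probability of \(y_{\mathrm{ori}}\) stays strictly positive. The upper bound \(e^{cM}\) is the step that depends on this, and I will state that dependence explicitly.
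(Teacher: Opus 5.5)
Your proof is correct and follows the same route as the paper: compactness of \(K\) and continuity of \(f\) make \(\exp(cf)\) continuous, bounded, measurable and strictly positive. Hence \(0<Z_{\mathrm{vic}}<\infty\) and the normalized function integrates to one. Your explicit bounds \(e^{cm}\le Z_{\mathrm{vic}}\le e^{cM}\) from the extreme value theorem simply make the paper's ``bounded and strictly positive'' step quantitative.
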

The proof is in Appendix~\ref{app:proof}. In this paper, we take \(\mathcal{D}\) to be the squared \(L_2\) norm, so that \(\pdis\) is a Gaussian distribution, and choose \(f\) to be the cross-entropy loss. Under this setting, the Langevin-dynamics-based algorithm for sampling adversarial examples from \(\padv\) is presented in Algorithm~\ref{alg:untargetedatk} (See Appendix~\ref{app:code}).

\section{A KL-Based Lower Bound on Probabilistic Robustness}
\label{sec:lowerbound}
Recall that $\mathrm{PR}(p, s)$ (introduced in Section~\ref{sec:probrobust}) evaluates the model's security against perturbations drawn from a distribution $p$ around each clean (original) input $\xori$. We observe that the distance distribution $p_{\mathrm{dis}}(\cdot; x_{\mathrm{ori}})$ introduced in Section~\ref{sec:probattack} naturally serves as the evaluation distribution $p$. Consequently, our objective formally becomes maximizing $\mathrm{PR}(p_{\mathrm{dis}}, s)$.

To establish a mathematical lower bound on PR, we must connect the attack-success event, characterized by the logit margin violation $s(x) \ge 0$ (Section~\ref{sec:probrobust}), with our optimization objective, which relies on the continuous cross-entropy loss $f(x, y_{\mathrm{ori}})$. However, establishing a direct, tractable algebraic mapping between the margin $s(x)$ and the loss $f$ is highly challenging due to softmax non-linearities and complex multi-class dynamics. 

To circumvent this, we define the attack-success region $E_{\mathrm{adv}} = \{x \in \mathcal{X} : s(x) \ge 0\}$ as the vulnerable space where the classifier is fooled. We then introduce a connecting scalar $\gamma_{*}$ representing the infimum of the scaled cross-entropy loss over this region:
\begin{equation*}
    \gamma_{*} := \inf_{x \in E_{\mathrm{adv}}} c_2 f(x, y_{\mathrm{ori}}; \theta).
\end{equation*}
Since the cross-entropy loss is inherently non-negative and the temperature scaling factor $c_2 > 0$, it strictly holds that $\gamma_{*} \ge 0$. By definition, any successful adversarial example $x \in E_{\mathrm{adv}}$ rigorously guarantees $c_2 f(x, y_{\mathrm{ori}}; \theta) \ge \gamma_{*}$. This property is pivotal, as it allows us to formally link the expected loss under $p_{\mathrm{dis}}$ to the probability mass of the vulnerable region, paving the way for the following theorem.

\begin{theorem}[KL-Based Lower Bound on Probabilistic Robustness] \label{thm:kl_bound}
Let $p_{\mathrm{dis}}(\cdot; x_{\mathrm{ori}})$ be a probability density on $[0,1]^{d}$. Suppose the victim distribution parameterized by $\theta$ is defined as
\begin{equation*}
    p_{\mathrm{vic}}(x; y_{\mathrm{ori}}, \theta) = \frac{\exp(c_2 f(x, y_{\mathrm{ori}}; \theta))}{Z_{\mathrm{vic}}},
\end{equation*}
where $Z_{\mathrm{vic}} = \int_{\mathcal{X}} \exp(c_2 f(u, y_{\mathrm{ori}}; \theta)) du$ is the normalizing constant. Let $\gamma_{*}$ be defined as above and assume $\gamma_{*}>0$. Then, the probabilistic robustness satisfies:
\begin{equation*}
    \mathrm{PR}(p_{\mathrm{dis}}, s) \ge 1 - \frac{\log Z_{\mathrm{vic}} - H(p_{\mathrm{dis}}) - \mathrm{KL}(p_{\mathrm{dis}} \| p_{\mathrm{vic}})}{\gamma_{*}},
\end{equation*}
where $H(p_{\mathrm{dis}}) = -\int_{\mathcal{X}} p_{\mathrm{dis}}(x) \log p_{\mathrm{dis}}(x) dx$ is the differential entropy of the distance distribution.
\end{theorem}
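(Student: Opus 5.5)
The plan is a Markov-inequality argument applied to the nonnegative random variable $c_2 f(X, y_{\mathrm{ori}};\theta)$ with $X \sim p_{\mathrm{dis}}$, followed by an exact identity rewriting its expectation in terms of the KL divergence, the entropy, and $\log Z_{\mathrm{vic}}$.

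\textbf{Step 1: bound the vulnerability by an expectation.} By definition of $\gamma_*$, every $x \in E_{\mathrm{adv}}$ satisfies $c_2 f(x,y_{\mathrm{ori}};\theta) \ge \gamma_*$. Hence
\[
\mathbb{I}_{\{s(x)\ge 0\}} \le \frac{c_2 f(x,y_{\mathrm{ori}};\theta)}{\gamma_*}
\quad\text{for all } x.
\]
On $E_{\mathrm{adv}}$ this is the defining property of $\gamma_*$. Off $E_{\mathrm{adv}}$ the left side is $0$ and the right side is nonnegative, because cross-entropy is nonnegative and $\gamma_*>0$. Integrating against $p_{\mathrm{dis}}$ gives
\[
\mathcal{I}[p_{\mathrm{dis}},s] \le \frac{\mathbb{E}_{p_{\mathrm{dis}}}[c_2 f(X,y_{\mathrm{ori}};\theta)]}{\gamma_*},
\]
and therefore
\[
\mathrm{PR}(p_{\mathrm{dis}},s) \ge 1 - \frac{\mathbb{E}_{p_{\mathrm{dis}}}[c_2 f]}{\gamma_*}.
\]

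\textbf{Step 2: rewrite the expected loss.} From the definition of $p_{\mathrm{vic}}$ we have $c_2 f(x,y_{\mathrm{ori}};\theta) = \log p_{\mathrm{vic}}(x) + \log Z_{\mathrm{vic}}$. Taking the expectation under $p_{\mathrm{dis}}$, and adding and subtracting $\mathbb{E}_{p_{\mathrm{dis}}}[\log p_{\mathrm{dis}}]$, gives
\[
\mathbb{E}_{p_{\mathrm{dis}}}[c_2 f]
= \log Z_{\mathrm{vic}} + \int p_{\mathrm{dis}}\log p_{\mathrm{vic}}
= \log Z_{\mathrm{vic}} - \mathrm{KL}(p_{\mathrm{dis}}\|p_{\mathrm{vic}}) - H(p_{\mathrm{dis}}).
\]
Here we use $\int p_{\mathrm{dis}}\log p_{\mathrm{vic}} = \int p_{\mathrm{dis}}\log p_{\mathrm{dis}} - \int p_{\mathrm{dis}}\log(p_{\mathrm{dis}}/p_{\mathrm{vic}})$. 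Substituting this into Step 1 yields exactly the stated bound.

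\textbf{Main obstacle: integrability.} The only real concern is that splitting $\int p_{\mathrm{dis}}\log p_{\mathrm{vic}}$ into the entropy and KL terms is legitimate only when each term is finite. I would handle this as follows. Take $\mathcal{X} = K = [0,1]^d$ and assume $f$ is continuous, as in the earlier Proposition. Then $f$ is bounded on $K$, so $\log p_{\mathrm{vic}}$ is bounded and $Z_{\mathrm{vic}} \in (0,\infty)$ by that Proposition. It follows that $\int p_{\mathrm{dis}}\log p_{\mathrm{vic}}$ is finite. The KL term and $H(p_{\mathrm{dis}})$ are then finite or infinite together. If both are finite, the identity holds. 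Otherwise, under the usual convention $\log p_{\mathrm{dis}}$ is assumed integrable (as for a truncated Gaussian), so $H(p_{\mathrm{dis}})$ is finite and both terms are well defined.
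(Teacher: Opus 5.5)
Your proposal is correct and takes essentially the same route as the paper. Both arguments bound $\gamma_*\,\mathcal{I}[p_{\mathrm{dis}},s]$ above by $\mathbb{E}_{p_{\mathrm{dis}}}[c_2 f]$ (the paper restricts the integral to $E_{\mathrm{adv}}$, you use the pointwise indicator bound, and both rely on $f\ge 0$), then apply the identity $\mathbb{E}_{p_{\mathrm{dis}}}[c_2 f]=\log Z_{\mathrm{vic}}-H(p_{\mathrm{dis}})-\mathrm{KL}(p_{\mathrm{dis}}\|p_{\mathrm{vic}})$, with your integrability remark (bounded $\log p_{\mathrm{vic}}$ on the compact cube, and finite $H(p_{\mathrm{dis}})$ needed for the right-hand side to be defined) being care the paper leaves implicit.
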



The proof is in Appendix~\ref{app:proof}. Crucially, Theorem \ref{thm:kl_bound} formally unifies the probabilistic attack perspective \citep{zhang2024constructing} with the probabilistic robustness framework \citep{webb2018statistical}. By expressing the lower bound on PR using the statistical divergence between $p_{\mathrm{dis}}$ and $p_{\mathrm{vic}}$, this theorem directly translates our intuition of ``pushing the distributions apart'' into a rigorous mathematical objective. 

Notice that once the clean input $\xori$ and the perturbation budget are fixed, the distance distribution $p_{\mathrm{dis}}$ and its differential entropy $H(p_{\mathrm{dis}})$ are constants. Furthermore, since $\gamma_{*}$ acts as a positive scaling factor, maximizing the lower bound on PR requires maximizing the term $\mathrm{KL}(p_{\mathrm{dis}} \| p_{\mathrm{vic}}) - \log Z_{\mathrm{vic}}$. This fundamental insight serves as the theoretical bedrock for the tractable surrogate objective derived in the next section. 

While intuitively one might assume that maximizing $KL(p_{dis}||p_{vic})$ alone is sufficient to separate the distributions, the $-\log Z_{vic}$ term holds crucial geometric and physical significance. We defer a detailed discussion on this to Appendix~\ref{app:discussion}.

\section{Probabilistic Adversarial Training}
\label{sec:pat}
Based on Section 4, maximizing the lower bound $KL(p_{dis}||p_{vic})-\log Z_{vic}$ acts as a surrogate objective that indirectly improves probabilistic robustness. However, calculating the exact gradients of this theoretical bound is challenging due to intractable normalizing constants. In this section, we derive a computable empirical loss function through a structured, three-step derivation. We then explicitly reveal how this empirical formulation recovers a scaled, probabilistic counterpart to standard adversarial training, before formally proving its convergence. Algorithm~\ref{alg:full_training} in Appendix~\ref{app:code} presents the pseudocode for the final practical implementation of probabilistic adversarial training.

\subsection{Deriving the Empirical Loss Function}

\textbf{Step 1: The Idealized Objective.} We first attempt to formulate an objective by directly applying importance sampling to the theoretical lower bound.

\begin{proposition}[Idealized Objective]
Let $p_{dis}$, $p_{vic}$, and $p_{adv}$ be defined as in Section 3. Maximizing the probabilistic robustness lower bound with respect to the classifier parameters $\theta$ is mathematically equivalent to minimizing the following idealized expected loss:
\begin{equation*}
\mathcal{J}_{ideal}(\theta)=\mathbb{E}_{X\sim p_{adv}(\cdot;x_{ori},y_{ori},\theta)}\left[\frac{Z_{adv}}{p_{vic}(X;y_{ori},\theta)}c_{2}f(X,y_{ori};\theta)\right]
\end{equation*}
\end{proposition}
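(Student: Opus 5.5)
The plan is to reduce the bound objective to an ordinary expectation of the scaled loss under $p_{dis}$, and then rewrite that expectation as an expectation under $p_{adv}$ by importance sampling. First I would expand the KL term using the explicit form of $p_{vic}$ from Theorem~\ref{thm:kl_bound}. Since $\log p_{vic}(x;y_{ori},\theta)=c_2 f(x,y_{ori};\theta)-\log Z_{vic}$, we get
\begin{equation*}
\mathrm{KL}(p_{dis}\|p_{vic})=-H(p_{dis})-\mathbb{E}_{X\sim p_{dis}}\bigl[c_2 f(X,y_{ori};\theta)\bigr]+\log Z_{vic}.
\end{equation*}
The $\log Z_{vic}$ terms therefore cancel exactly in the surrogate, which gives
\begin{equation*}
\mathrm{KL}(p_{dis}\|p_{vic})-\log Z_{vic}=-H(p_{dis})-\mathbb{E}_{X\sim p_{dis}}\bigl[c_2 f(X,y_{ori};\theta)\bigr].
\end{equation*}
As noted after Theorem~\ref{thm:kl_bound}, $H(p_{dis})$ does not depend on $\theta$. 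So maximizing the lower bound over $\theta$ is equivalent to minimizing $\mathbb{E}_{p_{dis}}[c_2 f]$.

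Second, I would perform the change of measure. By definition $p_{adv}(x)=p_{vic}(x)\,p_{dis}(x)/Z_{adv}$ with $Z_{adv}=\int p_{vic}p_{dis}\,dx$. Here $Z_{adv}>0$ and finite, because $p_{vic}$ is strictly positive and bounded on the compact domain (this is the Proposition on well-definedness, using continuity of $f$), and $p_{dis}$ is a density. Since $p_{vic}>0$ everywhere, we can solve for $p_{dis}=Z_{adv}\,p_{adv}/p_{vic}$ pointwise, with no support issues. Substituting into $\int p_{dis}\,c_2 f\,dx$ gives exactly $\mathcal{J}_{ideal}(\theta)$. Integrability holds because $f$ is continuous and hence bounded on $[0,1]^d$.

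The main subtlety is interpretive rather than computational, and I would state it explicitly. The equivalence is an identity of objective values for each fixed $\theta$, so the two problems share their minimizers. However, the sampling distribution $p_{adv}$, the weight $Z_{adv}$ and $p_{vic}$ all depend on $\theta$. The identity therefore does not by itself give a gradient estimator, and this is presumably why the statement calls the objective ``idealized''. I would also make sure the domain $\mathcal{X}$ used in $Z_{vic}$ matches $[0,1]^d$, the support of $p_{dis}$, so that the change of measure is valid on the whole integration region.
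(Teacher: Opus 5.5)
Your proof is correct and follows essentially the paper's route. You expand the KL using $\log p_{vic}=c_2 f-\log Z_{vic}$, discard the $\theta$-independent entropy, and rewrite the $p_{dis}$-expectation under $p_{adv}$ via $p_{dis}=Z_{adv}\,p_{adv}/p_{vic}$, obtaining $\mathrm{KL}(p_{dis}\|p_{vic})-\log Z_{vic}=-H(p_{dis})-\mathcal{J}_{ideal}(\theta)$; the only difference is that you cancel $\log Z_{vic}$ under $p_{dis}$ before changing measure, whereas the paper changes measure on $\int p_{dis}\log p_{vic}$ first and then cancels $\log Z_{vic}$ using $\mathbb{E}_{p_{adv}}[Z_{adv}/p_{vic}]=1$ (which is just $\int p_{dis}=1$ in disguise).
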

See Appendix~\ref{app:proof} for the proof. While Proposition 3 demonstrates that the intractable $\log Z_{vic}$ cancels out, directly optimizing $\mathcal{J}_{ideal}(\theta)$ is highly problematic. The sampling distribution $p_{adv}(X;x_{ori},y_{ori},\theta)\propto p_{dis}(X;x_{ori})p_{vic}(X;y_{ori},\theta)$ depends on the parameters $\theta$. Differentiating through this expectation is notoriously unstable and requires complex score-function estimators.

\textbf{Step 2: The Gradient-First Approach.} To circumvent the challenge of differentiating through a parameterized distribution, we propose computing the gradient of the lower bound before applying importance sampling.

\begin{proposition}[Gradient of the Objective]
\label{thm:prop4}
The exact gradient of the negative lower bound with respect to $\theta$ can be expressed as an expectation over the adversarial distribution $p_{adv}$:
\begin{equation*}
\nabla_{\theta}\mathcal{J}(\theta)=\mathbb{E}_{X\sim p_{adv}(\cdot;x_{ori},y_{ori},\theta)}\left[\frac{Z_{adv}}{p_{vic}(X;y_{ori},\theta)}\nabla_{\theta}(c_{2}f(X,y_{ori};\theta))\right]
\end{equation*}
\end{proposition}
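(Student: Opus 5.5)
We start by writing the negative lower bound explicitly as a function of $\theta$, dropping the terms that do not depend on $\theta$. By Theorem~\ref{thm:kl_bound}, maximizing the bound amounts to minimizing
\[
\mathcal{J}(\theta) := \log Z_{\mathrm{vic}}(\theta) - \mathrm{KL}(p_{\mathrm{dis}}\|p_{\mathrm{vic}}).
\]
Expanding the KL term with $\log p_{\mathrm{vic}} = c_2 f - \log Z_{\mathrm{vic}}$ gives
\[
\mathrm{KL}(p_{\mathrm{dis}}\|p_{\mathrm{vic}}) = -H(p_{\mathrm{dis}}) - \mathbb{E}_{p_{\mathrm{dis}}}[c_2 f(X,y_{\mathrm{ori}};\theta)] + \log Z_{\mathrm{vic}}.
\]
The two $\log Z_{\mathrm{vic}}$ terms therefore cancel, and
\[
\mathcal{J}(\theta) = \mathbb{E}_{X\sim p_{\mathrm{dis}}}[c_2 f(X,y_{\mathrm{ori}};\theta)] + H(p_{\mathrm{dis}}).
\]
This is the same cancellation used in the proof of the Idealized Objective proposition, which we may take as given. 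Since $H(p_{\mathrm{dis}})$ does not depend on $\theta$, it drops out of the gradient.

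Next we differentiate. The sampling distribution $p_{\mathrm{dis}}(\cdot;x_{\mathrm{ori}})$ does not depend on $\theta$, so the gradient passes inside the integral:
\[
\nabla_\theta\mathcal{J}(\theta) = \int p_{\mathrm{dis}}(x)\,\nabla_\theta\big(c_2 f(x,y_{\mathrm{ori}};\theta)\big)\,dx .
\]
To justify the interchange of $\nabla_\theta$ and $\int$ we use dominated convergence (Leibniz rule). We assume $f$ is continuously differentiable in $\theta$ jointly with $x$ on the compact set $[0,1]^d$, so that $\nabla_\theta f$ is bounded on $[0,1]^d \times$ a neighborhood of $\theta$. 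This is the one analytic point requiring an assumption, and it matches the continuity hypothesis of the well-definedness proposition.

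Finally we rewrite the expectation under $p_{\mathrm{dis}}$ as an importance-weighted expectation under $p_{\mathrm{adv}}$. With $p_{\mathrm{adv}} = p_{\mathrm{dis}}\,p_{\mathrm{vic}}/Z_{\mathrm{adv}}$, where $Z_{\mathrm{adv}} = \int p_{\mathrm{dis}}\,p_{\mathrm{vic}}$, we have the pointwise identity
\[
p_{\mathrm{dis}}(x) = p_{\mathrm{adv}}(x)\,\frac{Z_{\mathrm{adv}}}{p_{\mathrm{vic}}(x;y_{\mathrm{ori}},\theta)} .
\]
This identity is valid because $p_{\mathrm{vic}} > 0$ everywhere on $[0,1]^d$, being an exponential of a finite continuous function. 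The support of $p_{\mathrm{adv}}$ therefore equals that of $p_{\mathrm{dis}}$, and the change of measure loses no mass. Substituting it into the integral gives exactly the claimed expression.

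A subtle but important point is that $\theta$ enters $p_{\mathrm{adv}}$ and the weight $Z_{\mathrm{adv}}/p_{\mathrm{vic}}$ only through this exact rewriting, after differentiation. The weight is therefore not differentiated, and no score-function term appears.

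The main obstacle is keeping the order of operations straight: the gradient must be taken while the measure is still the $\theta$-independent $p_{\mathrm{dis}}$, and the importance-sampling rewrite applied only afterwards. I would present the proof in that order.
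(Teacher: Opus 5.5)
Your proposal is correct and follows essentially the same route as the paper. You rewrite the negative bound as $H(p_{\mathrm{dis}})+\mathbb{E}_{p_{\mathrm{dis}}}[c_2 f]$, differentiate under the $\theta$-independent measure $p_{\mathrm{dis}}$, and then change measure to $p_{\mathrm{adv}}$ via $p_{\mathrm{dis}}/p_{\mathrm{adv}} = Z_{\mathrm{adv}}/p_{\mathrm{vic}}$. Your dominated-convergence justification for the interchange and your positivity check on $p_{\mathrm{vic}}$ are details the paper leaves implicit.
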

See Appendix~\ref{app:proof} for the proof. This formulation decouples the sampling distribution from the gradient computation. Because the gradient operator $\nabla_{\theta}$ only applies to the loss term $f$, the importance weight $w(X;x_{ori},y_{ori},\theta)=\frac{Z_{adv}}{p_{vic}(X;y_{ori},\theta)}$ acts purely as a scaling constant for the gradients of individual samples.

\textbf{Step 3: Self-Normalized Importance Sampling.} Although Proposition 4 provides a relatively tractable gradient direction, evaluating the theoretical weight $w(X;x_{ori},y_{ori},\theta)$ is still impossible. By substituting $p_{vic}(X;y_{ori},\theta)=\frac{1}{Z_{vic}}\exp(c_{2}f(X,y_{ori};\theta))$, we reveal the hidden constants:
\begin{equation*}
w(X;x_{ori},y_{ori},\theta)=Z_{adv}Z_{vic}\exp(-c_{2}f(X,y_{ori};\theta))
\end{equation*}
Both $Z_{adv}$ and $Z_{vic}$ are intractable to compute. However, for a given clean image $x_{ori}$ and a specific set of model parameters $\theta$, their product is strictly a constant across different sampled perturbations.

To circumvent the intractability of these constants, we employ Self-Normalized Importance Sampling (SNIS). Given a mini-batch of adversarial examples $X^{(1)},...,X^{(B)}$ sampled from $\padv$, we normalize the weights across the batch:
\begin{align*}
\hat{w}^{(i)}&=\frac{w(X^{(i)};x_{ori},y_{ori},\theta)}{\sum_{j=1}^{B}w(X^{(j)};x_{ori},y_{ori},\theta)} =\frac{Z_{adv}Z_{vic}\exp(-c_{2}f(X^{(i)},y_{ori};\theta))}{\sum_{j=1}^{B}Z_{adv}Z_{vic}\exp(-c_{2}f(X^{(j)},y_{ori};\theta))} \\
&=\frac{\exp(-c_{2}f(X^{(i)},y_{ori};\theta))}{\sum_{j=1}^{B}\exp(-c_{2}f(X^{(j)},y_{ori};\theta))}
\end{align*}
The intractable terms $Z_{adv}$ and $Z_{vic}$ cancel out exactly. This self-normalized weighting scheme naturally corresponds to applying a Softmax function. Consequently, the true gradient derived in Proposition 4 can be estimated by backpropagating through the following empirical loss:
\begin{equation*}
\mathcal{L}_{emp}(\theta)=\sum_{i=1}^{B}\hat{w}^{(i)}f(X^{(i)},y_{ori};\theta)
\end{equation*}
where $\hat{w}^{(i)}$ are treated as fixed constants (i.e., stop-gradient) during optimization, as explicitly implemented in Lines 16-18 of Algorithm~\ref{alg:full_training} (Appendix~\ref{app:code}).

\subsection{Connection to Standard Adversarial Training}

Standard adversarial training (AT) methods, such as PGD with random restarts, can be viewed probabilistically as drawing samples from an implicit adversarial distribution $p_{adv}$ to minimize the unweighted expected loss $\mathbb{E}_{X\sim p_{adv}(\cdot | \xori, \yori)}[f(X,y_{ori};\theta)]$. Empirically, this directly corresponds to optimizing a uniformly weighted batch loss $\frac{1}{B}\sum_{i=1}^{B}f(X^{(i)},y_{ori};\theta)$.

In this work, both our idealized objective $\mathcal{J}_{ideal}$ and empirical loss $\mathcal{L}_{emp}$ recover this exact AT paradigm, differing solely by a principled scaling factor. Instead of treating all samples equally, Probabilistic Adversarial Training (PAT) optimizes a weighted loss using the self-normalized importance weights $\hat{w}^{(i)} \propto \exp(-c_2 f(X^{(i)},y_{ori};\theta))$. 

This negative exponent acts as a direct penalty on extreme adversarial examples. While standard AT strictly focuses on worst-case perturbations, PAT deliberately down-weights extreme outliers. This weighting mechanism prevents the model from over-optimizing for worst-case attacks, thereby improving overall probabilistic robustness.

\subsection{Convergence of the Empirical Gradient Estimator}
We present a convergence theorem demonstrating that our self-normalized importance sampling (SNIS) estimator is asymptotically consistent.

\begin{theorem}[Asymptotic Consistency of the SNIS Gradient Estimator]
\label{thm:snis}
Let $X^{(1)}, \dots, X^{(B)}$ be i.i.d. samples drawn from the adversarial distribution $p_{adv}(\cdot; x_{ori}, y_{ori}, \theta)$. Define the unnormalized importance weights as $\tilde{w}(X; y_{ori}, \theta) = \exp(-c_2 f(X, y_{ori}; \theta))$. The SNIS empirical gradient estimator is given by:
\begin{equation*}
    \hat{g}_B(\theta) = \sum_{i=1}^{B} \hat{w}^{(i)} \nabla_\theta f(X^{(i)}, y_{ori}; \theta) = \frac{\sum_{i=1}^{B} \tilde{w}(X^{(i)}; y_{ori}, \theta) \nabla_\theta f(X^{(i)}, y_{ori}; \theta)}{\sum_{i=1}^{B} \tilde{w}(X^{(i)}; y_{ori}, \theta)}
\end{equation*}
Assume the expected gradient under the distance distribution is finite, i.e., $\mathbb{E}_{X \sim p_{dis}(\cdot; x_{ori})}[|\nabla_\theta f(X, y_{ori}; \theta)|] < \infty$. As the batch size $B \to \infty$, the estimator $\hat{g}_B(\theta)$ converges almost surely to the true expected gradient under $p_{dis}(\cdot; x_{ori})$:
\begin{equation*}
    \hat{g}_B(\theta) \xrightarrow{a.s.} \mathbb{E}_{X \sim p_{dis}(\cdot; x_{ori})}[\nabla_\theta f(X, y_{ori}; \theta)]
\end{equation*}
\end{theorem}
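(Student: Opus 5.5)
The strategy is to write the estimator as a ratio of two empirical means and apply the strong law of large numbers to numerator and denominator separately, then combine them with the continuous mapping theorem.

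\textbf{Step 1 (a pointwise identity).} With $\tilde w(x)=\exp(-c_2 f(x,y_{ori};\theta))$, we have
\[
p_{adv}(x)=\frac{p_{dis}(x)\exp(c_2 f(x,y_{ori};\theta))}{Z'},\qquad Z'=\int p_{dis}(u)\exp(c_2 f(u,y_{ori};\theta))\,du .
\]
Hence, pointwise,
\[
\tilde w(x)\,p_{adv}(x)=\frac{p_{dis}(x)}{Z'} .
\]
Before relying on this, I would verify that $0<Z'<\infty$. On the compact set $[0,1]^d$ the continuous function $f$ is bounded, as in the well-definedness proposition of Section~\ref{sec:untargeted}, so $\exp(c_2 f)$ is bounded above and bounded away from zero, and $p_{dis}$ integrates to one.

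\textbf{Step 2 (the denominator).} Consider $D_B=\frac1B\sum_i \tilde w(X^{(i)})$. Since $0<\tilde w\le 1$ (cross-entropy is nonnegative), $\tilde w$ is integrable under $p_{adv}$. The strong law gives
\[
D_B\xrightarrow{a.s.}\mathbb{E}_{p_{adv}}[\tilde w]=\int \frac{p_{dis}}{Z'}=\frac1{Z'}>0 .
\]

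\textbf{Step 3 (the numerator).} Consider $N_B=\frac1B\sum_i \tilde w(X^{(i)})\nabla_\theta f(X^{(i)})$, a vector average handled componentwise. By the identity in Step 1,
\[
\mathbb{E}_{p_{adv}}\bigl[|\tilde w\,\nabla_\theta f|\bigr]=\frac1{Z'}\,\mathbb{E}_{p_{dis}}\bigl[|\nabla_\theta f|\bigr]<\infty ,
\]
which is finite precisely by the stated integrability assumption; this is the only place that assumption is needed. The strong law then gives
\[
N_B\xrightarrow{a.s.}\frac1{Z'}\,\mathbb{E}_{p_{dis}}[\nabla_\theta f] .
\]

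\textbf{Step 4 (combining).} On the intersection of the two probability-one events, $N_B\to a$ and $D_B\to b$ with $b>0$. The map $(a,b)\mapsto a/b$ is continuous at such points, so
\[
\hat g_B=\frac{N_B}{D_B}\xrightarrow{a.s.}\mathbb{E}_{p_{dis}}[\nabla_\theta f(X,y_{ori};\theta)] .
\]
The constant $Z'$ cancels in the ratio.

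The main obstacle is not probabilistic; it is making the change of measure in Step 1 rigorous. This requires that $p_{adv}$ have the same support as $p_{dis}$ (guaranteed because $\tilde w>0$) and that the normalizer be finite and positive. If $p_{dis}$ is a Gaussian on $\mathbb{R}^d$ rather than on $[0,1]^d$, boundedness of $f$ must be replaced by a separate argument. I would restrict to the paper's setting on $[0,1]^d$, where continuity of $f$ settles this.
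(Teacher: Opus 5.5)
Your proposal is correct and follows the paper's proof essentially step for step: you apply the SLLN to the numerator and denominator, use the pointwise identity $\tilde w\,p_{adv}=p_{dis}/(Z_{adv}Z_{vic})$ (your $Z'$ equals $Z_{adv}Z_{vic}$), and pass to the ratio by the continuous mapping theorem. You also verify two points the paper leaves implicit: that the numerator is integrable under $p_{adv}$, which follows from the same identity and the stated assumption, and that the denominator's limit is strictly positive.
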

The proof is provided in Appendix~\ref{app:proof}.

\subsection{Batch-Level Approximation and Implicit Regularization}
\label{sec:batch}
While Theorem 5 assumes multiple perturbations per single image $x_{ori}$, this is computationally expensive. For efficiency, our practical implementation (Algorithm~\ref{alg:full_training} in Appendix~\ref{app:code}) applies SNIS across a mini-batch of clean images $\{(x^{(i)},y^{(i)})\}\sim p_{data}$, generating only one perturbation per image. This batch-level formulation alters the weighting scheme. A strict estimator over the joint distribution requires weighting each sample by an intractable local adversarial volume:
\begin{equation*}
\overline{w}_{true}^{(i)}\propto Z_{adv}^{(i)}Z_{vic}^{(i)}\exp(-c_{2}f(X^{(i)},y^{(i)};\theta))
\end{equation*}
By definition, $Z_{adv}^{(i)}Z_{vic}^{(i)}=\mathbb{E}_{U\sim p_{dis}(\cdot;x^{(i)})}[\exp(c_{2}f(U,y^{(i)};\theta))]$. For highly vulnerable samples, this large expected volume theoretically compensates for the sharp decay of $\exp(-c_{2}f)$. Naively omitting the volume term while retaining a large attack temperature $c_{2}$ leads to \textbf{catastrophic weight collapse}: the weights of crucial hard samples are driven to near zero, forcing the Softmax to collapse onto the easiest samples and defeating the purpose of adversarial training.

To counteract this without calculating the intractable volume, we introduce a small smoothing factor $\beta$ (e.g., $\beta=0.001$) to replace $c_2$ in the final empirical weighting:
\begin{equation}
\label{eq:wemp}
\tilde{w}_{emp}^{(i)}=\exp(-\beta f(X^{(i)},y^{(i)};\theta))
\end{equation}
This $\beta$ acts as a conceptual temperature scaling that prevents weight collapse for valid hard samples. Concurrently, retaining the negative exponential structure preserves the algorithmic benefit of omitting $Z_{adv}^{(i)}Z_{vic}^{(i)}$: \textbf{implicit regularization against robust overfitting}. Outliers or mislabeled data, which theoretically possess exploding local volumes, are penalized. This extends our core philosophy of down-weighting extreme worst-case outliers to the global data manifold. The theoretical analysis of this batch-level estimator is provided in Appendix~\ref{app:batch}.


\begin{table}[t]
\centering
\caption{Comprehensive evaluation of Probabilistic Robustness (PR) and test accuracies across CIFAR-10 and CIFAR-100 datasets. PR is evaluated under escalating perturbation scales
}
\label{tab:evaluation_results}
\resizebox{\textwidth}{!}{%
\begin{tabular}{c c l c c c c c c c}
\toprule
{\textbf{D.}} & {\textbf{M.}} & \textbf{Attack} & \textbf{Acc. \%} & \textbf{PGD \%} & \textbf{CW \%} & PR($\cdot$, 0.1) \% & PR($\cdot$, 0.12) \% & PR($\cdot$, 0.15) \% & PR($\cdot$, 0.2) \% \\
\midrule
\multirow{18}{*}{\rotatebox{90}{CIFAR-10}} & \multirow{9}{*}{\rotatebox{90}{ResNet-18}} & Clean & \textbf{93.75} & 0.00 & 0.00 & 49.82 & 36.62 & 24.19 & 16.36 \\
 &  & FGSM & 87.16 & 0.01 & 0.05 & 88.88 & 84.23 & 75.83 & 59.91 \\
 &  & PGD & 82.00 & 45.80 & 44.94 & 95.26 & 92.55 & 87.72 & 77.03 \\
 &  & ALP & 81.24 & 46.65 & 45.56 & 95.36 & 92.88 & 88.16 & 77.94 \\
 &  & CLP & 83.23 & 44.11 & 43.34 & 95.37 & 93.03 & 88.47 & 78.77 \\
 &  & TRADES & 79.62 & 48.78 & 45.91 & 94.33 & 91.97 & 87.33 & 77.33 \\
 &  & MART & 78.56 & \textbf{51.30} & \textbf{47.02} & 94.26 & 91.86 & 87.36 & 78.29 \\
 &  & AT-PR & 80.33 & 40.58 & 39.23 & 95.25 & 92.80 & 88.10 & 78.22 \\
 &  & PAT & 82.44 & 32.53 & 30.51 & \textbf{95.53} & \textbf{93.43} & \textbf{89.26} & \textbf{80.62} \\
\cmidrule{2-10}
 & \multirow{9}{*}{\rotatebox{90}{WRN-50-2}} & Clean & \textbf{91.89} & 0.00 & 0.00 & 56.99 & 44.66 & 31.52 & 20.85 \\
 &  & FGSM & 87.60 & 0.00 & 0.00 & 79.79 & 67.51 & 44.35 & 21.04 \\
 &  & PGD & 76.49 & 46.54 & \textbf{44.33} & 95.59 & 93.56 & 89.44 & 79.98 \\
 &  & ALP & 72.45 & 44.42 & 41.50 & 95.71 & 93.76 & 89.71 & 81.23 \\
 &  & CLP & 79.61 & 37.71 & 36.41 & 95.86 & 93.69 & 88.94 & 78.70 \\
 &  & TRADES & 74.03 & 44.38 & 40.82 & 94.72 & 92.62 & 88.48 & 79.89 \\
 &  & MART & 69.87 & \textbf{46.83} & 42.29 & 94.93 & 92.83 & 89.26 & 81.93 \\
 &  & AT-PR & 75.93 & 40.03 & 39.21 & 95.47 & 93.61 & 89.61 & 80.23 \\
 &  & PAT & 78.26 & 31.94 & 29.91 & \textbf{95.98} & \textbf{93.96} & \textbf{90.96} & \textbf{82.19} \\
\midrule
\multirow{18}{*}{\rotatebox{90}{CIFAR-100}} & \multirow{9}{*}{\rotatebox{90}{ResNet-18}} & Clean & \textbf{74.61} & 0.00 & 0.00 & 27.45 & 19.20 & 11.93 & 7.08 \\
 &  & FGSM & 59.56 & 0.00 & 0.00 & 64.50 & 53.45 & 38.65 & 21.62 \\
 &  & PGD & 54.50 & 22.48 & 21.18 & 89.27 & 83.31 & 71.01 & 46.90 \\
 &  & ALP & 54.88 & 23.83 & 21.98 & 90.45 & 84.69 & 72.95 & 49.35 \\
 &  & CLP & 55.71 & 22.08 & 19.28 & 90.05 & 84.80 & 74.50 & 54.53 \\
 &  & TRADES & 53.56 & 26.74 & 22.66 & 89.07 & 84.23 & 74.74 & 56.24 \\
 &  & MART & 52.13 & \textbf{26.88} & \textbf{23.51} & 90.02 & 84.86 & 74.66 & 51.76 \\
 &  & AT-PR & 52.67 & 21.48 & 20.28 & 89.09 & 83.14 & 71.18 & 47.64 \\
 &  & PAT & 53.54 & 17.50 & 14.87 & \textbf{90.77} & \textbf{86.58} & \textbf{78.07} & \textbf{59.78} \\
\cmidrule{2-10}
 & \multirow{9}{*}{\rotatebox{90}{WRN-50-2}} & Clean & \textbf{70.41} & 0.00 & 0.00 & 25.51 & 17.95 & 11.23 & 6.24 \\
 &  & FGSM & 51.04 & 0.00 & 0.00 & 39.19 & 25.26 & 13.12 & 5.26 \\
 &  & PGD & 49.55 & 24.04 & \textbf{22.00} & 92.06 & 87.94 & 78.85 & 58.99 \\
 &  & ALP & 44.38 & 23.10 & 20.80 & 92.07 & 88.07 & 79.92 & 62.17 \\
 &  & CLP & 52.95 & 19.15 & 16.94 & 90.91 & 86.36 & 76.20 & 56.99 \\
 &  & TRADES & 49.85 & 23.01 & 19.18 & 90.64 & 86.46 & 78.32 & 62.03 \\
 &  & MART & 42.61 & \textbf{24.67} & 21.15 & 92.03 & 88.52 & 81.41 & 66.51 \\
 &  & AT-PR & 48.47 & 22.02 & 20.98 & 92.11 & 88.08 & 79.87 & 61.58 \\
 &  & PAT & 49.95 & 17.26 & 15.98 & \textbf{92.28} & \textbf{89.49} & \textbf{82.18} & \textbf{67.12} \\
\bottomrule
\end{tabular}
}
\end{table}

\section{Experiments}
\label{sec:exps}
Due to space constraints, this section focuses on the main evaluation results. Comprehensive details regarding the experimental settings, datasets, and hyperparameters are deferred to Appendix~\ref{app:details}.

\subsection{Probabilistic Robustness}

Table~\ref{tab:evaluation_results} summarizes the performance of PAT and baselines (PGD \citep{madry2017towards}, ALP \citep{kannan2018adversarial}, CLP \citep{kannan2018adversarial}, TRADES \citep{zhang2019theoretically}, MART \citep{wang2019improving} and AT-PR \citep{zhang2025adversarial}) on CIFAR-10 and CIFAR-100 \citep{krizhevsky2009learning}. PAT consistently achieves superior Probabilistic Robustness (PR) across all evaluated architectures (WRN-50-2, ResNet-18) and datasets, with the advantage becoming more pronounced at larger perturbation scales.

While PAT shows a nominal decrease in strictly worst-case robustness (PGD/CW Acc.) compared to unweighted AT, this directly aligns with our theory. Unlike standard AT which over-fits to anomalous worst-case boundaries, PAT's theoretically derived importance weight naturally penalizes extreme outliers. This optimizes overall distributional security, effectively pushing the distance and victim distributions apart.

\subsection{Ablation Study}
\label{sec:ablation}
Table~\ref{tab:ablation} isolates the impact of our theoretically derived importance weight through two variants: \textbf{PAT(WOS)} (Langevin sampling without the importance weight) and \textbf{PGD-COR/FGSM-COR} (standard attacks equipped with our theoretical scaling factor).

PAT consistently outperforms PAT(WOS). On CIFAR-100 (WRN-50-2), removing the importance weight drops PR(0.2) from $67.12\%$ to $63.98\%$. This confirms that sampling from $p_{adv}$ alone is insufficient; down-weighting extreme outliers via our theoretical formulation is an essential requirement for maximizing probabilistic robustness.

Our importance weight acts as a versatile plug-in regularizer. Applying it to standard PGD (PGD-COR) improves PR(0.2) on CIFAR-100 from $58.99\%$ to $62.21\%$. Notably, the performance of FGSM-COR exhibits more irregularity. This discrepancy is theoretically expected: as a deterministic, single-step attack, FGSM lacks the iterative exploration and stochasticity required for the importance weight to effectively evaluate the local adversarial distribution. Nonetheless, the consistent gains in PGD-COR demonstrate the broader utility of our weighting scheme in improving the distributional robustness of traditional iterative defense methods.

\begin{table}[tbp]
\centering
\caption{Ablation results for the importance weight across various architectures. PAT(WOS) removes the weighting factor; COR denotes the application of our scaling factor to standard baselines.}
\label{tab:ablation}
\resizebox{\textwidth}{!}{%
\begin{tabular}{c c l c c c c c c c}
\toprule
{\textbf{D.}} & {\textbf{M.}} & \textbf{Attack} & \textbf{Acc. \%} & \textbf{PGD \%} & \textbf{CW \%} & PR($\cdot$, 0.1) \% & PR($\cdot$, 0.12) \% & PR($\cdot$, 0.15) \% & PR($\cdot$, 0.2) \% \\
\midrule
\multirow{12}{*}{\rotatebox{90}{CIFAR-10}} & \multirow{6}{*}{\rotatebox{90}{ResNet-18}} & FGSM & 87.16 & 0.01 & \textbf{0.05} & 88.88 & 84.23 & \textbf{75.83} & \textbf{59.91} \\
 &  & FGSM-COR & \textbf{89.04} & \textbf{0.02} & 0.03 & \textbf{89.03} & \textbf{84.53} & 75.33 & 56.45 \\
\cmidrule{3-10}
 &  & PGD & \textbf{82.00} & 45.80 & \textbf{44.94} & \textbf{95.26} & \textbf{92.55} & \textbf{87.72} & \textbf{77.03} \\
 &  & PGD-COR & 81.75 & \textbf{45.81} & 44.72 & 95.03 & 92.38 & 87.33 & 75.97 \\
\cmidrule{3-10}
 &  & PAT(WOS) & \textbf{82.96} & 32.15 & 30.34 & 94.86 & 92.56 & 88.36 & 79.81 \\
 &  & PAT & 82.44 & \textbf{32.53} & \textbf{30.51} & \textbf{95.53} & \textbf{93.43} & \textbf{89.26} & \textbf{80.62} \\
\cmidrule{3-10}
 & \multirow{6}{*}{\rotatebox{90}{WRN-50-2}} & FGSM & \textbf{87.60} & 0.00 & 0.00 & 79.79 & 67.51 & 44.35 & 21.04 \\
 &  & FGSM-COR & 76.46 & \textbf{42.63} & \textbf{41.01} & \textbf{93.42} & \textbf{90.26} & \textbf{84.74} & \textbf{74.40} \\
\cmidrule{3-10}
 &  & PGD & \textbf{76.49} & \textbf{46.54} & \textbf{44.33} & 95.59 & 93.56 & 89.44 & 79.98 \\
 &  & PGD-COR & 73.45 & 45.11 & 42.71 & \textbf{95.79} & \textbf{93.85} & \textbf{89.79} & \textbf{80.62} \\
\cmidrule{3-10}
 &  & PAT(WOS) & \textbf{79.06} & \textbf{31.99} & 29.63 & 94.63 & 92.29 & 87.91 & 79.25 \\
 &  & PAT & 78.26 & 31.94 & \textbf{29.91} & \textbf{95.98} & \textbf{93.96} & \textbf{90.96} & \textbf{82.19} \\
\midrule
\multirow{12}{*}{\rotatebox{90}{CIFAR-100}} & \multirow{6}{*}{\rotatebox{90}{ResNet-18}} & FGSM & \textbf{59.56} & \textbf{0.00} & 0.00 & \textbf{64.50} & \textbf{53.45} & \textbf{38.65} & \textbf{21.62} \\
 &  & FGSM-COR & 56.20 & \textbf{0.00} & \textbf{0.01} & 62.68 & 51.06 & 35.49 & 18.52 \\
\cmidrule{3-10}
 &  & PGD & 54.50 & 22.48 & 21.18 & 89.27 & 83.31 & 71.01 & 46.90 \\
 &  & PGD-COR & \textbf{55.02} & \textbf{23.42} & \textbf{22.06} & \textbf{90.11} & \textbf{84.23} & \textbf{72.39} & \textbf{47.92} \\
\cmidrule{3-10}
 &  & PAT(WOS) & 53.21 & 16.84 & 14.53 & 90.47 & 86.24 & \textbf{78.07} & \textbf{59.83} \\
 &  & PAT & \textbf{53.54} & \textbf{17.50} & \textbf{14.87} & \textbf{90.77} & \textbf{86.58} & \textbf{78.07} & 59.78 \\
\cmidrule{3-10}
 & \multirow{6}{*}{\rotatebox{90}{WRN-50-2}} & FGSM & 51.04 & \textbf{0.00} & \textbf{0.00} & 39.19 & 25.26 & \textbf{13.12} & \textbf{5.26} \\
 &  & FGSM-COR & \textbf{52.50} & \textbf{0.00} & \textbf{0.00} & \textbf{41.66} & \textbf{26.31} & 11.75 & 3.88 \\
\cmidrule{3-10}
 &  & PGD & \textbf{49.55} & \textbf{24.04} & \textbf{22.00} & 92.06 & 87.94 & 78.85 & 58.99 \\
 &  & PGD-COR & 44.87 & 22.74 & 20.46 & \textbf{92.66} & \textbf{88.79} & \textbf{80.55} & \textbf{62.21} \\
\cmidrule{3-10}
 &  & PAT(WOS) & \textbf{50.17} & 16.92 & 14.96 & 91.65 & 87.91 & 80.22 & 63.98 \\
 &  & PAT & 49.95 & \textbf{17.26} & \textbf{15.98} & \textbf{92.28} & \textbf{89.49} & \textbf{82.18} & \textbf{67.12} \\
\bottomrule
\end{tabular}
}
\end{table}

\section{Related Work}

\textbf{Adversarial Training for Worst-Case Robustness.} Traditional adversarial training (AT) formulates a min-max optimization problem to defend against worst-case perturbations \citep{madry2017towards}. A vast literature has developed around improving this worst-case robustness (WCR), including regularization techniques like TRADES \citep{zhang2019theoretically} and MART \citep{wang2019improving}, logit pairing methods such as ALP and CLP \citep{kannan2018adversarial}, and efficient training variants like Fast-AT \citep{wong2020fast} and Free-AT \citep{shafahi2019adversarial}. Complementary to our work, \citet{li2024adversarial} theoretically analyzed the feature learning dynamics of AT. However, these WCR-focused methods inherently optimize for extreme, deterministic adversarial outliers, which frequently compromises standard accuracy and broader distributional security \citep{zhao2026probabilistic}.

\textbf{Probabilistic Robustness Assessment.} To bridge the gap between extreme-case vulnerability and practical reliability, \citet{webb2018statistical} formally defined Probabilistic Robustness (PR) to evaluate the overall likelihood of adversarial examples within a local neighborhood. Following this, significant progress has been made in PR assessment, including scalable statistical estimators \citep{tit2021efficient, karim2023gradient}, probabilistic verification approaches \citep{weng2019proven}, and extensions to functional perturbations \citep{zhang2022proa}. A comprehensive survey of these evaluation metrics is provided by Zhao \citep{zhao2026probabilistic}. While evaluating PR has become increasingly sophisticated, dedicated training frameworks designed to directly optimize it remain sparse.

\textbf{Defenses Targeting Probabilistic Robustness.} Only a few recent works have explicitly aimed at improving PR during training. \citet{wang2021statistically} demonstrated that adding random perturbations enhances PR, though it yields near-zero worst-case robustness. Other approaches \citep{robey2022probabilistically, zhang2024prass} proposed training with risk measures such as Conditional Value-at-Risk (CVaR) or Entropic Value-at-Risk (EVaR), but also struggled to maintain competitive standard adversarial robustness. The most direct comparison to our work is the recent AT-PR method \citep{zhang2025adversarial}, which explicitly alters the inner maximization of AT to target PR. However, their approach relies on a heuristic, gradient-based boundary search to empirically find the ``widest peak'' in the loss landscape. In contrast, our Probabilistic Adversarial Training (PAT) establishes a rigorous theoretical foundation: we formalize untargeted probabilistic attacks via Langevin Dynamics, prove a KL-based lower bound for PR, and employ Self-Normalized Importance Sampling (SNIS) to derive a fully tractable, principled optimization objective.

\section{Conclusion}
In this work, we provided a unified probabilistic perspective on adversarial defense by proving that $KL(p_{dis}||p_{vic}) - \log Z_{vic}$ serves as a lower bound for probabilistic robustness (PR). By maximizing this tractable surrogate objective via SNIS applied to gradient, we introduced Probabilistic Adversarial Training (PAT). Our theoretical framework not only offers a principled probabilistic interpretation of standard adversarial training but also yields a natural importance-weighting mechanism that penalizes extreme worst-case outliers. Empirically, PAT consistently enhances PR across various architectures. We hope this probabilistic perspective of adversarial vulnerabilities inspires more robust and theoretically grounded defense strategies in the future.

\subsubsection*{Acknowledgments}
The work presented in this paper has been supported by UKRI Future Leaders Fellowship (Grant MR/S035176/1) and funded by the European Union under the Horizon Europe project AIGGREGATE (AI-enhanced collective intelligence for resilient, ethical and user-centric awareness and decision making in CCAM applications, Grant Agreement No. 101202457). Views and opinions expressed are those of the author(s) only and do not necessarily reflect those of the European Union. Neither the European Union nor the granting authority can be held responsible for them.


\bibliography{iclr2027_conference}
\bibliographystyle{iclr2027_conference}

\appendix
\clearpage
\section*{Appendix}

\setcounter{theorem}{0}

\section{Pseudo Code of the Algorithms}
\label{app:code}

\begin{algorithm}[h]
   \caption{Untargeted Probabilistic Adversarial Attack}
   \label{alg:untargetedatk}
\begin{algorithmic}
   \STATE {\bfseries Input:} Original image \(\xori\in [0,1]^d\), original class \(\yori\), cross-entropy loss \(f\) corresponding to the victim classifier, parameters \(c_1,c_2>0\), step size \(\eta\), noise scale \(\sigma\), gradient clipping threshold \(\rho>0\), total timesteps \(T\).
   \STATE {\bfseries Output:} Adversarial example \(\xadv\).
   \STATE \(\xadv\sim \text{Uniform}[0,1]^d\) \hfill $\triangleright$ Initialize
   \FOR{\(t=1\) {\bfseries to} \(T\)}
      \STATE \(z_t\sim\mathcal{N}(0,I)\) \hfill $\triangleright$ Generate noise for Langevin Dynamics
      \STATE \(\xadv \leftarrow \Pi_{[0,1]^d}\left(\xadv+\sigma z_t\right)\)\hfill $\triangleright$ Apply noise
      \STATE \(\mathcal{E} \leftarrow c_1\,\|\xadv-\xori\|_2^2-c_2\,f(\xadv,\yori)\)\hfill $\triangleright$ Calculate energy
      \STATE \(v_t \leftarrow \mathrm{clip}(\nabla_{\xadv}\mathcal{E},-\rho,\rho)\)\hfill $\triangleright$ Compute and clip gradient
      \STATE \(\xadv \leftarrow \Pi_{[0,1]^d}\left(\xadv-\eta v_t\right)\)\hfill $\triangleright$ update $\xadv$
   \ENDFOR
   \STATE \textbf{return} \(\xadv\).
\end{algorithmic}
\end{algorithm}

\begin{algorithm}[htbp]
   \caption{Probabilistic Adversarial Training}
   \label{alg:full_training}
\begin{algorithmic}[1]
   \STATE {\bfseries Input:} Training dataset $\mathcal{D}$, initial model parameters $\theta$, cross-entropy loss $f$, learning rate $\alpha$.
   \STATE {\bfseries Hyperparameters:} Total epochs $E_{\text{total}}$, batch size $B$, inner steps $T$, step size $\eta$, noise scale $\sigma$, gradient clip $\rho$, energy weights $c_1, c_2$, smoothing factor \(\beta\).
   \STATE {\bfseries Output:} Robust model parameters $\theta$.
   
   \FOR{epoch $e = 0$ {\bfseries to} $E_{\text{total}} - 1$}
      \FOR{{\bfseries each} minibatch $(\mathbf{X}, \mathbf{Y}) \sim \mathcal{D}$ with size $B$}
         
         \STATE $\triangleright$ \textbf{Phase 1: Probabilistic Adversarial Example Generation}
         \STATE $\mathbf{X}_{\text{adv}} \sim \text{Uniform}[0,1]^{B \times d}$\hfill $\triangleright$ Initialize
         \FOR{$t=1$ {\bfseries to} $T$}
            \STATE $\mathbf{Z}_t \sim \mathcal{N}(\mathbf{0},\mathbf{I})^{B \times d}$\hfill $\triangleright$ Sample noise for Langevin Dynamics
            \STATE $\mathbf{X}_{\text{adv}} \leftarrow \Pi_{[0,1]^{B \times d}}\left(\mathbf{X}_{\text{adv}} + \sigma \mathbf{Z}_t\right)$\hfill $\triangleright$ Apply noise
            \STATE $\mathcal{E} \leftarrow \sum_{i=1}^B \Big( c_1\,\|\mathbf{X}_{\text{adv}}^{(i)} - \mathbf{X}^{(i)}\|_2^2 - c_2\,f_\theta(\mathbf{X}_{\text{adv}}^{(i)}, \mathbf{Y}^{(i)}) \Big)$ \hfill $\triangleright$ Calculate batch energy
            \STATE $\mathbf{V}_t \leftarrow \mathrm{clip}(\nabla_{\mathbf{X}_{\text{adv}}}\mathcal{E}, -\rho, \rho)$ \hfill $\triangleright$ Compute and clip gradient
            \STATE $\mathbf{X}_{\text{adv}} \leftarrow \Pi_{[0,1]^{B \times d}}\left(\mathbf{X}_{\text{adv}} - \eta \mathbf{V}_t\right)$ \hfill $\triangleright$ Update $\mathbf{X}_{\text{adv}}$
         \ENDFOR
         
         \STATE $\triangleright$ \textbf{Phase 2: Model Parameter Update via Gradient Descent}
         \STATE $\boldsymbol{\ell}_i \leftarrow f_\theta(\mathbf{X}_{\text{adv}}^{(i)}, \mathbf{Y}^{(i)})$\hfill $\triangleright$ Compute sample losses $\boldsymbol{\ell} \in \mathbb{R}^B$

            \STATE $\tilde{\mathbf{w}}_{\text{pvic}} \leftarrow \mathrm{Detach}(-\beta\, \boldsymbol{\ell})$\hfill $\triangleright$ Apply smoothing factor and stop gradient
            \STATE $\hat{\mathbf{w}}_{\text{pvic}} \leftarrow \mathrm{Softmax}(\tilde{\mathbf{w}}_{\text{pvic}})$ \hfill $\triangleright$ Softmax over batch dimension
            \STATE $\mathcal{L} \leftarrow \boldsymbol{\ell}^\top \hat{\mathbf{w}}_{\text{pvic}}$ \hfill $\triangleright$ Final weighted loss

         \STATE $\theta \leftarrow \theta - \alpha \nabla_\theta \mathcal{L}$ \hfill $\triangleright$ Update model parameters
      \ENDFOR
   \ENDFOR
   \STATE \textbf{return} $\theta$.
\end{algorithmic}
\end{algorithm}

\section{Proofs of Theorems in the Main Text}
\label{app:proof}
\begin{proposition}[Well-definedness of the untargeted victim distribution]
Let \(K=[0,1]^d\) and let \(f(\cdot,y_{\mathrm{ori}}):K\to\mathbb{R}\) be continuous. For any \(c>0\), define
\[
    p_{\mathrm{vic}}(x;y_{\mathrm{ori}})
    =
    \frac{\exp\big(c f(x,y_{\mathrm{ori}})\big)}
    {\int_K \exp\big(c f(u,y_{\mathrm{ori}})\big)\,du},
    \qquad x\in K.
\]
Then \(p_{\mathrm{vic}}(\cdot;y_{\mathrm{ori}})\) is a well-defined probability density on \(K\).
\end{proposition}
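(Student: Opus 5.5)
The plan is to show three things: the normalizing constant \(Z:=\int_K \exp\big(c f(u,y_{\mathrm{ori}})\big)\,du\) is a finite positive real number, the resulting function is nonnegative and measurable, and it integrates to one. Everything rests on the compactness of \(K=[0,1]^d\) and the continuity of \(f(\cdot,y_{\mathrm{ori}})\). In the untargeted setting the exponent carries a plus sign, so \(\exp(cf)\) grows with the loss. Finiteness of \(Z\) is therefore not automatic in general, and this is precisely why the compact domain is needed.

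\textbf{Step 1: finiteness of \(Z\).} The map \(x\mapsto \exp(c f(x,y_{\mathrm{ori}}))\) is a composition of continuous functions, hence continuous on \(K\). By the extreme value theorem, since \(K\) is compact, it attains a maximum \(M<\infty\) and a minimum \(m\). We have \(m>0\) because the exponential is strictly positive. The integrand is continuous and bounded, so it is Lebesgue (indeed Riemann) integrable on \(K\), and
\[
0<m\cdot \mathrm{vol}(K)\le Z\le M\cdot \mathrm{vol}(K)=M<\infty ,
\]
using \(\mathrm{vol}(K)=1\).

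\textbf{Step 2: density properties.} Because \(0<Z<\infty\), the quotient \(p_{\mathrm{vic}}(x;y_{\mathrm{ori}})\) is well defined for every \(x\in K\). It is continuous, hence Borel measurable, and strictly positive. By linearity of the integral,
\[
\int_K p_{\mathrm{vic}}(x;y_{\mathrm{ori}})\,dx=\frac{1}{Z}\int_K \exp\big(cf(x,y_{\mathrm{ori}})\big)\,dx=1 .
\]
This establishes that \(p_{\mathrm{vic}}(\cdot;y_{\mathrm{ori}})\) is a probability density on \(K\).

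\textbf{Main obstacle.} The substantive point is the upper bound on \(Z\) in Step 1. With a positive exponent, an unbounded loss on a non-compact domain such as \(\mathbb{R}^d\) could make \(Z=\infty\). Here that is excluded by continuity on the compact cube, so the difficulty is mild. It is worth remarking in the proof that the hypothesis that \(f\) is continuous on \(K\), rather than merely on the interior, is what guarantees boundedness. For cross-entropy composed with a continuous network, this holds because the softmax probabilities are continuous and strictly positive on \(K\), hence bounded away from zero.
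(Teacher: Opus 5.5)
Your proof is correct and follows essentially the same route as the paper's: continuity of \(\exp(cf)\) on the compact cube gives boundedness and strict positivity, so \(0<Z<\infty\), and normalizing then yields a nonnegative, measurable function that integrates to one. Using the extreme value theorem to get the explicit bounds \(m\le Z\le M\) only makes the paper's positivity and finiteness claims for \(Z\) slightly more explicit.
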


\begin{proof}
Since \(K=[0,1]^d\) is compact and \(f(\cdot,y_{\mathrm{ori}})\) is continuous, 
\(\exp(c f(\cdot,y_{\mathrm{ori}}))\) is also continuous on \(K\). Hence it is bounded and measurable. 
Moreover, it is strictly positive on \(K\), so the normalizing constant
\[
    Z_{\mathrm{vic}}
    =
    \int_K \exp\big(c f(u,y_{\mathrm{ori}})\big)\,du
\]
satisfies \(0<Z_{\mathrm{vic}}<\infty\). Therefore \(p_{\mathrm{vic}}\) is non-negative, measurable, and integrates to one over \(K\). Thus it is a well-defined probability density.
\end{proof}

\begin{theorem}[KL-Based Lower Bound on Probabilistic Robustness]
Let $p_{\mathrm{dis}}(\cdot; x_{\mathrm{ori}})$ be a probability density on $[0,1]^{d}$. Suppose the victim distribution parameterized by $\theta$ is defined as
\begin{equation*}
    p_{\mathrm{vic}}(x; y_{\mathrm{ori}}, \theta) = \frac{\exp(c_2 f(x, y_{\mathrm{ori}}; \theta))}{Z_{\mathrm{vic}}},
\end{equation*}
where $Z_{\mathrm{vic}} = \int_{\mathcal{X}} \exp(c_2 f(u, y_{\mathrm{ori}}; \theta)) du$ is the normalizing constant. Let $\gamma_{*}$ be defined as above and assume $\gamma_{*}>0$. Then, the probabilistic robustness satisfies:
\begin{equation*}
    \mathrm{PR}(p_{\mathrm{dis}}, s) \ge 1 - \frac{\log Z_{\mathrm{vic}} - H(p_{\mathrm{dis}}) - \mathrm{KL}(p_{\mathrm{dis}} \| p_{\mathrm{vic}})}{\gamma_{*}},
\end{equation*}
where $H(p_{\mathrm{dis}}) = -\int_{\mathcal{X}} p_{\mathrm{dis}}(x) \log p_{\mathrm{dis}}(x) dx$ is the differential entropy of the distance distribution.
\end{theorem}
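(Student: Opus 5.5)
The plan is to expand the KL divergence into an expected loss, which turns the right-hand side of Theorem~\ref{thm:kl_bound} into a Markov-inequality bound on the attack-success probability. Using the explicit form of $p_{\mathrm{vic}}$, we have $\log p_{\mathrm{vic}}(x) = c_2 f(x,y_{\mathrm{ori}};\theta) - \log Z_{\mathrm{vic}}$. Substituting this into
\[
\mathrm{KL}(p_{\mathrm{dis}}\|p_{\mathrm{vic}}) = \int p_{\mathrm{dis}}\log p_{\mathrm{dis}}\,dx - \int p_{\mathrm{dis}}\log p_{\mathrm{vic}}\,dx
\]
should give the identity
\[
\mathrm{KL}(p_{\mathrm{dis}}\|p_{\mathrm{vic}}) = -H(p_{\mathrm{dis}}) - \mathbb{E}_{X\sim p_{\mathrm{dis}}}\bigl[c_2 f(X,y_{\mathrm{ori}};\theta)\bigr] + \log Z_{\mathrm{vic}}.
\]
Rearranged, this reads
\[
\mathbb{E}_{p_{\mathrm{dis}}}\bigl[c_2 f\bigr] = \log Z_{\mathrm{vic}} - H(p_{\mathrm{dis}}) - \mathrm{KL}(p_{\mathrm{dis}}\|p_{\mathrm{vic}}).
\]
So the numerator in the theorem is exactly the expected scaled cross-entropy under the distance distribution.

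Next I would relate the attack-success probability to this expectation. By the definition of $\gamma_*$, every $x\in E_{\mathrm{adv}}$ satisfies $c_2 f(x,y_{\mathrm{ori}};\theta)\ge\gamma_*$. This gives the inclusion $E_{\mathrm{adv}}\subseteq\{x: c_2 f(x,y_{\mathrm{ori}};\theta)\ge\gamma_*\}$, and hence
\[
\mathcal{I}[p_{\mathrm{dis}},s] \le \mathbb{P}_{p_{\mathrm{dis}}}\bigl(c_2 f(X)\ge\gamma_*\bigr).
\]
Since the cross-entropy is nonnegative and $\gamma_*>0$, Markov's inequality bounds the right-hand side by $\mathbb{E}_{p_{\mathrm{dis}}}[c_2 f]/\gamma_*$. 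Combining this with the identity from the first step and using $\mathrm{PR}(p_{\mathrm{dis}},s) = 1-\mathcal{I}[p_{\mathrm{dis}},s]$ yields the stated bound.

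The algebra is routine, so the main obstacle is bookkeeping: we must make sure every term in the identity is finite, so that splitting the KL integral into three pieces is legitimate. The domain is $\mathcal{X}=[0,1]^d$ and $f$ is continuous there. By the same compactness argument as in the well-definedness proposition above, $c_2 f$ is bounded, so $\mathbb{E}_{p_{\mathrm{dis}}}[c_2 f]$ is finite and $0<Z_{\mathrm{vic}}<\infty$.

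It remains to handle $H(p_{\mathrm{dis}})$. I would assume it is finite, which holds for the truncated Gaussian used in the paper. Then $\mathrm{KL}$ is finite as well, because it equals a sum of finite quantities.

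Finally, I would note that $E_{\mathrm{adv}}$ is measurable, since $s$ is a maximum of continuous logit differences. This makes $\mathcal{I}[p_{\mathrm{dis}},s]$ well defined, and the inclusion step above is then valid.
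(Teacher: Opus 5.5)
Your proof is correct and follows essentially the same route as the paper. The paper also expands $\mathrm{KL}(p_{\mathrm{dis}}\|p_{\mathrm{vic}})$ to isolate $\mathbb{E}_{p_{\mathrm{dis}}}[c_2 f]$, and it bounds this expectation below by $\gamma_*\,\mathcal{I}[p_{\mathrm{dis}},s]$ via $\int_{E_{\mathrm{adv}}} p_{\mathrm{dis}}\,c_2 f\,dx \ge \gamma_*\int_{E_{\mathrm{adv}}} p_{\mathrm{dis}}\,dx$, which is your Markov step written inline; your finiteness and measurability checks are details the paper leaves implicit.
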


\begin{proof}
By the definition of $\gamma_*$, for all $x \in E_{adv}$, we have $c_2 f(x, y_{ori}) \ge \gamma_*$. Thus, we can lower bound the expected loss under $p_{dis}$:
\begin{equation*}
\mathbb{E}_{X \sim p_{dis}}[c_2 f(X, y_{ori})] \ge \int_{E_{adv}} p_{dis}(x) c_2 f(x, y_{ori}) dx \ge \gamma_* \int_{E_{adv}} p_{dis}(x) dx = \gamma_* \mathcal{I}[p_{dis}, s]
\end{equation*}

On the other hand, expanding the KL divergence with $p_{vic}(x; y_{ori}) = \frac{\exp(c_2 f(x, y_{ori}))}{Z_{vic}}$ yields:
\begin{equation*}
\begin{aligned}
KL(p_{dis}||p_{vic}) &= \int p_{dis}(x) \log p_{dis}(x) dx - \int p_{dis}(x) \log p_{vic}(x; y_{ori}) dx \\
&= -H(p_{dis}) - \mathbb{E}_{X \sim p_{dis}}[c_2 f(X, y_{ori}) - \log Z_{vic}] \\
&= -H(p_{dis}) + \log Z_{vic} - \mathbb{E}_{X \sim p_{dis}}[c_2 f(X, y_{ori})] \\
&\le -H(p_{dis}) + \log Z_{vic} - \gamma_* \mathcal{I}[p_{dis}, s]
\end{aligned}
\end{equation*}

Rearranging the inequality gives:
\begin{equation*}
\mathcal{I}[p_{dis}, s] \le \frac{\log Z_{vic} - H(p_{dis}) - KL(p_{dis}||p_{vic})}{\gamma_*}
\end{equation*}

Finally, by the definition of probabilistic robustness $PR(p_{dis}, s) = 1 - \mathcal{I}[p_{dis}, s]$, we conclude:
\begin{equation*}
PR(p_{dis}, s) \ge 1 - \frac{\log Z_{vic} - H(p_{dis}) - KL(p_{dis}||p_{vic})}{\gamma_*}
\end{equation*}
\end{proof}

\begin{proposition}[Idealized Objective]
Let $p_{dis}$, $p_{vic}$, and $p_{adv}$ be defined as in Section 3. Maximizing the probabilistic robustness lower bound with respect to the classifier parameters $\theta$ is mathematically equivalent to minimizing the following idealized expected loss:
\begin{equation*}
\mathcal{J}_{ideal}(\theta)=\mathbb{E}_{X\sim p_{adv}(\cdot;x_{ori},y_{ori},\theta)}\left[\frac{Z_{adv}}{p_{vic}(X;y_{ori},\theta)}c_{2}f(X,y_{ori};\theta)\right]
\end{equation*}
\end{proposition}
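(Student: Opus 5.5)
The plan is to reduce the lower bound of Theorem~\ref{thm:kl_bound} to a single expectation under $p_{dis}$, and then rewrite that expectation as an expectation under $p_{adv}$ by importance sampling.

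\textbf{Isolating the $\theta$-dependent part.} Since $\gamma_*>0$ is a positive scaling and $H(p_{dis})$ does not depend on $\theta$, maximizing the bound amounts to maximizing $\mathrm{KL}(p_{dis}\|p_{vic})-\log Z_{vic}$. Here I treat $\gamma_*$ as a fixed positive constant, which is the same reading the main text adopts right after Theorem~\ref{thm:kl_bound}. The expansion already carried out in the proof of Theorem~\ref{thm:kl_bound} gives
\begin{equation*}
\mathrm{KL}(p_{dis}\|p_{vic})-\log Z_{vic} = -H(p_{dis}) - \mathbb{E}_{X\sim p_{dis}}[c_2 f(X,y_{ori};\theta)].
\end{equation*}
The $\log Z_{vic}$ term therefore cancels exactly. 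Maximizing the bound is then equivalent to minimizing $\mathcal{J}(\theta):=\mathbb{E}_{X\sim p_{dis}}[c_2 f(X,y_{ori};\theta)]$.

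\textbf{Changing measure to $p_{adv}$.} Write $p_{adv}(x)=p_{dis}(x)p_{vic}(x;y_{ori},\theta)/Z_{adv}$, where $Z_{adv}=\int p_{dis}(u)p_{vic}(u;y_{ori},\theta)\,du$. Then
\begin{equation*}
p_{dis}(x)=\frac{Z_{adv}}{p_{vic}(x;y_{ori},\theta)}\,p_{adv}(x).
\end{equation*}
Substituting this into $\mathcal{J}(\theta)$ gives exactly $\mathcal{J}_{ideal}(\theta)$. Both expressions are the same integral, so the identity $\mathcal{J}=\mathcal{J}_{ideal}$ holds for every $\theta$, and the minimizers coincide.

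\textbf{Checking the steps are legitimate.} The change of measure needs $p_{vic}>0$ on the support of $p_{dis}$, which holds because $p_{vic}$ is a strictly positive exponential. It also needs $0<Z_{adv}<\infty$. This follows because $p_{vic}$ is bounded on the compact domain $[0,1]^d$ by the continuity argument in the well-definedness proposition, and $p_{dis}$ is a density. Finally, $\mathbb{E}_{p_{dis}}[f]$ must be finite, which again follows from continuity of $f$ on the compact domain.

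\textbf{Main obstacle.} The real difficulty is interpretive rather than computational. The claimed equivalence holds only if $\gamma_*$ is regarded as fixed; otherwise its own $\theta$-dependence would change the maximizer. I would state this convention explicitly, and then present the rest of the argument as the algebraic identity above.
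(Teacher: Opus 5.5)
Your proof is correct and follows essentially the same route as the paper: expand the KL divergence, then change measure from $p_{dis}$ to $p_{adv}$ using the weight $Z_{adv}/p_{vic}$. The only difference is the order of steps: you cancel $\log Z_{vic}$ under $p_{dis}$ before changing measure, whereas the paper changes measure first and then cancels it using $\mathbb{E}_{p_{adv}}[Z_{adv}/p_{vic}]=1$ (which is just $\int p_{dis}=1$); your explicit caveat that $\gamma_*$ must be treated as independent of $\theta$ is a convention the paper leaves implicit.
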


\begin{proof}

By expanding the KL divergence and changing the measure from $p_{dis}$ to $p_{adv}$, we have:
\begin{align*}
&KL(p_{dis}(\cdot;x_{ori})||p_{vic}(\cdot;y_{ori},\theta))-\log Z_{vic} \\
&=\mathbb{E}_{X\sim p_{dis}(\cdot;x_{ori})}[\log p_{dis}(X;x_{ori})]-\int p_{dis}(x;x_{ori})\log p_{vic}(x;y_{ori},\theta)dx-\log Z_{vic} \\
&=-H(p_{dis})-\int p_{adv}(x;x_{ori},y_{ori},\theta)\frac{Z_{adv}p_{dis}(x;x_{ori})}{p_{dis}(x;x_{ori})p_{vic}(x;y_{ori},\theta)}\log p_{vic}(x;y_{ori},\theta)dx-\log Z_{vic} \\
&=-H(p_{dis})-\mathbb{E}_{X\sim p_{adv}(\cdot;x_{ori},y_{ori},\theta)}\left[\frac{Z_{adv}}{p_{vic}(X;y_{ori},\theta)}(c_{2}f(X,y_{ori};\theta)-\log Z_{vic})\right]-\log Z_{vic} \\
&=-H(p_{dis})-\mathbb{E}_{X\sim p_{adv}(\cdot;x_{ori},y_{ori},\theta)}\left[\frac{Z_{adv}}{p_{vic}(X;y_{ori},\theta)}c_{2}f(X,y_{ori};\theta)\right] \\
&\quad +\log Z_{vic}\underbrace{\mathbb{E}_{X\sim p_{adv}(\cdot;x_{ori},y_{ori},\theta)}\left[\frac{Z_{adv}}{p_{vic}(X;y_{ori},\theta)}\right]}_{=1}-\log Z_{vic} \\
&=-H(p_{dis})-\mathbb{E}_{X\sim p_{adv}(\cdot;x_{ori},y_{ori},\theta)}\left[\frac{Z_{adv}}{p_{vic}(X;y_{ori},\theta)}c_{2}f(X,y_{ori};\theta)\right]
\end{align*}
Since $H(p_{dis})$ is constant with respect to $\theta$, maximizing the lower bound is equivalent to minimizing $\mathcal{J}_{ideal}(\theta)$.
\end{proof}

\begin{proposition}[Gradient of the Objective]
The exact gradient of the negative lower bound with respect to $\theta$ can be expressed as an expectation over the adversarial distribution $p_{adv}$:
\begin{equation*}
\nabla_{\theta}\mathcal{J}(\theta)=\mathbb{E}_{X\sim p_{adv}(\cdot;x_{ori},y_{ori},\theta)}\left[\frac{Z_{adv}}{p_{vic}(X;y_{ori},\theta)}\nabla_{\theta}(c_{2}f(X,y_{ori};\theta))\right]
\end{equation*}
\end{proposition}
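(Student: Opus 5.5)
The plan is to go back to the underlying expression rather than differentiating $\mathcal{J}_{ideal}$ directly. From the proof of Proposition 3, the negative lower bound is
\[
\mathcal{J}(\theta) = -\mathrm{KL}(p_{dis}\|p_{vic}) + \log Z_{vic} = H(p_{dis}) + \mathbb{E}_{X\sim p_{dis}(\cdot;x_{ori})}\left[c_2 f(X,y_{ori};\theta)\right].
\]
The $\log Z_{vic}$ terms cancel exactly, as already shown there. The key structural fact is that in this form the sampling distribution $p_{dis}$ does \emph{not} depend on $\theta$, and $H(p_{dis})$ is constant in $\theta$.

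\textbf{Step 1: differentiate in the $p_{dis}$ form.} Interchanging gradient and integral gives
\[
\nabla_\theta \mathcal{J}(\theta) = \mathbb{E}_{X\sim p_{dis}}\left[\nabla_\theta\, c_2 f(X,y_{ori};\theta)\right].
\]
To justify the interchange, I would invoke dominated convergence. The domain $[0,1]^d$ is compact, and I would assume $f$ is continuously differentiable in $\theta$, which holds for standard networks with cross-entropy loss. Then $\nabla_\theta f$ is locally bounded uniformly in $x$, so a dominating integrable function exists. This is the same integrability hypothesis that later appears in Theorem~\ref{thm:snis}.

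\textbf{Step 2: change measure back to $p_{adv}$.} Here $\theta$ is held fixed inside the integrand. Since
\[
p_{adv}(x) = \frac{p_{dis}(x)\,p_{vic}(x;\theta)}{Z_{adv}},
\]
and $p_{vic}>0$ everywhere on $[0,1]^d$ (it is an exponential), we have the identity
\[
p_{dis}(x) = p_{adv}(x)\,\frac{Z_{adv}}{p_{vic}(x;y_{ori},\theta)}.
\]
Substituting this into the expectation from Step 1 yields the claimed formula. The support condition is automatic because $p_{vic}$ is strictly positive. This is the same change of measure used in the proof of Proposition 3, now applied to the gradient rather than the loss.

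The only delicate point is conceptual rather than computational. One must \emph{not} differentiate $\mathcal{J}_{ideal}$ with the $\theta$-dependence of $p_{adv}$, $Z_{adv}$ and $p_{vic}$ all active. In the $p_{adv}$ form, those dependencies cancel in aggregate, because the product $p_{adv}\,Z_{adv}/p_{vic}$ equals $p_{dis}$ and is therefore $\theta$-free. Doing the differentiation in the $p_{dis}$ representation first avoids this issue entirely.

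I expect the main obstacle to be the rigor of the gradient–integral interchange. I would handle it with the compactness and continuity assumptions stated above, noting that $\nabla_\theta$ may be applied inside the integral by the Leibniz rule.
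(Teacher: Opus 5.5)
Your proposal is correct and is essentially the paper's own proof: you rewrite the negative bound as $H(p_{dis})+\mathbb{E}_{X\sim p_{dis}}[c_2 f(X,y_{ori};\theta)]$, pass $\nabla_\theta$ inside because $p_{dis}$ does not depend on $\theta$, and then change measure using $p_{dis}=p_{adv}\,Z_{adv}/p_{vic}$. Your dominated-convergence justification of the interchange is something the paper omits. Note that it needs a bound on $\nabla_\theta f$ that is uniform over $x\in[0,1]^d$ and over a neighbourhood of $\theta$ (for example, from joint continuity of $\nabla_\theta f$ in $(x,\theta)$), which is stronger than the pointwise integrability hypothesis of Theorem~\ref{thm:snis} you cite.
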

\begin{proof}
We return to the negative lower bound before importance sampling: 
\[-KL(p_{dis}||p_{vic})+\log Z_{vic}=H(p_{dis})+\mathbb{E}_{X\sim p_{dis}(\cdot;x_{ori})}[c_{2}f(X,y_{ori};\theta)].\]
Since $p_{dis}(\cdot;x_{ori})$ depends only on the clean image $x_{ori}$ and is entirely independent of $\theta$, the gradient operator can pass directly inside the expectation:
\begin{align*}
&\nabla_{\theta}\left(\mathbb{E}_{X\sim p_{dis}(\cdot;x_{ori})}[c_{2}f(X,y_{ori};\theta)]\right)\\
=&\mathbb{E}_{X\sim p_{dis}(\cdot;x_{ori})}[\nabla_{\theta}c_{2}f(X,y_{ori};\theta)] \\
=&\int p_{adv}(x;x_{ori},y_{ori},\theta)\frac{p_{dis}(x;x_{ori})}{p_{adv}(x;x_{ori},y_{ori},\theta)}\nabla_{\theta}c_{2}f(x,y_{ori};\theta)dx \\
=&\mathbb{E}_{X\sim p_{adv}(\cdot;x_{ori},y_{ori},\theta)}\left[\frac{Z_{adv}}{p_{vic}(X;y_{ori},\theta)}\nabla_{\theta}c_{2}f(X,y_{ori};\theta)\right]
\end{align*}
\end{proof}

\begin{theorem}[Asymptotic Consistency of the SNIS Gradient Estimator]
Let $X^{(1)}, \dots, X^{(B)}$ be i.i.d. samples drawn from the adversarial distribution $p_{adv}(\cdot; x_{ori}, y_{ori}, \theta)$. Define the unnormalized importance weights as $\tilde{w}(X; y_{ori}, \theta) = \exp(-c_2 f(X, y_{ori}; \theta))$. The SNIS empirical gradient estimator is given by:
\begin{equation*}
    \hat{g}_B(\theta) = \sum_{i=1}^{B} \hat{w}^{(i)} \nabla_\theta f(X^{(i)}, y_{ori}; \theta) = \frac{\sum_{i=1}^{B} \tilde{w}(X^{(i)}; y_{ori}, \theta) \nabla_\theta f(X^{(i)}, y_{ori}; \theta)}{\sum_{i=1}^{B} \tilde{w}(X^{(i)}; y_{ori}, \theta)}
\end{equation*}
Assume the expected gradient under the distance distribution is finite, i.e., $\mathbb{E}_{X \sim p_{dis}(\cdot; x_{ori})}[|\nabla_\theta f(X, y_{ori}; \theta)|] < \infty$. As the batch size $B \to \infty$, the estimator $\hat{g}_B(\theta)$ converges almost surely to the true expected gradient under $p_{dis}(\cdot; x_{ori})$:
\begin{equation*}
    \hat{g}_B(\theta) \xrightarrow{a.s.} \mathbb{E}_{X \sim p_{dis}(\cdot; x_{ori})}[\nabla_\theta f(X, y_{ori}; \theta)]
\end{equation*}
\end{theorem}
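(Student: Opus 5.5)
The plan is to write $\hat{g}_B(\theta)$ as a ratio of two empirical means, apply the strong law of large numbers to numerator and denominator separately, and then combine them by the continuous mapping theorem. Set $g(x) := \nabla_\theta f(x, y_{ori}; \theta)$ and $\tilde{w}(x) := \exp(-c_2 f(x, y_{ori}; \theta))$. Dividing numerator and denominator by $B$ gives
\begin{equation*}
\hat{g}_B(\theta) = \frac{A_B}{D_B}, \qquad
A_B = \frac{1}{B}\sum_{i=1}^{B} \tilde{w}(X^{(i)})\, g(X^{(i)}), \qquad
D_B = \frac{1}{B}\sum_{i=1}^{B} \tilde{w}(X^{(i)}).
\end{equation*}

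The first step is to compute the population means under $p_{adv}$. From Section~3, $p_{adv}(x) = p_{dis}(x)\,p_{vic}(x)/Z_{adv}$ with $p_{vic}(x) = \exp(c_2 f(x))/Z_{vic}$. It follows that
\begin{equation*}
\tilde{w}(x)\, p_{adv}(x) = \frac{p_{dis}(x)}{Z_{adv} Z_{vic}}.
\end{equation*}
This is the same change of measure used in the proof of Proposition~\ref{thm:prop4}, where the weight was $Z_{adv}/p_{vic} = Z_{adv} Z_{vic}\,\tilde{w}$. With $C := (Z_{adv} Z_{vic})^{-1}$ we get $\mathbb{E}_{p_{adv}}[\tilde{w}(X)] = C$, because $p_{dis}$ integrates to one. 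We also get $\mathbb{E}_{p_{adv}}[\tilde{w}(X) g(X)] = C\, \mathbb{E}_{p_{dis}}[g(X)]$.

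Integrability is needed for the law of large numbers. The denominator is fine because $\mathbb{E}_{p_{adv}}[|\tilde{w}|] = C < \infty$. For the numerator, $\mathbb{E}_{p_{adv}}[|\tilde{w} g|] = C\,\mathbb{E}_{p_{dis}}[|g|]$, which is finite exactly by the stated assumption; for a vector-valued gradient, apply this coordinatewise. We also need $0 < C < \infty$, i.e.\ $0 < Z_{adv}, Z_{vic} < \infty$. For $Z_{vic}$ this follows from the well-definedness proposition of Section~3, using continuity of $f$ on the compact set $[0,1]^d$. For $Z_{adv} = \int p_{dis} p_{vic}$, positivity and finiteness follow because $p_{vic}$ is bounded above and bounded away from zero on $[0,1]^d$.

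By Kolmogorov's strong law for i.i.d.\ integrable variables, $A_B \to C\,\mathbb{E}_{p_{dis}}[g]$ almost surely and $D_B \to C$ almost surely. The two events have probability one, so their intersection does too. The map $(a, d) \mapsto a/d$ is continuous on $\{d > 0\}$, and $D_B > 0$ surely because every weight is positive. The ratio therefore converges almost surely to $C\,\mathbb{E}_{p_{dis}}[g]/C = \mathbb{E}_{p_{dis}}[g]$. The intractable constant $C$ cancels, mirroring the cancellation in Step~3 of Section~5.

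The main obstacle is not the limit argument but the bookkeeping behind it: the weight identity, the integrability transfer, and the positivity and finiteness of $Z_{adv}$. The step I would check first is that $p_{dis}$ and $p_{adv}$ have the same support, so that the change of measure is exact; this holds here because $p_{vic} > 0$ everywhere.
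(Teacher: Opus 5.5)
Your proposal is correct and follows the paper's own route. The paper also applies the strong law separately to the numerator and the denominator, uses the identity $\tilde{w}\,p_{adv} = p_{dis}/(Z_{adv}Z_{vic})$ to identify both limits, and then takes the ratio by continuous mapping so that the constant cancels. Your explicit checks fill in what the paper leaves implicit: that integrability transfers from $p_{dis}$ to $p_{adv}$, that $0 < Z_{adv}Z_{vic} < \infty$, and that the limiting denominator is strictly positive.
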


\begin{proof}
By the Strong Law of Large Numbers (SLLN), as $B \to \infty$, the sample averages in the numerator and denominator of $\hat{g}_B(\theta)$ converge almost surely to their respective expectations under $p_{adv}(\cdot; x_{ori}, y_{ori}, \theta)$:
\begin{equation*}
\begin{aligned}
    \frac{1}{B} \sum_{i=1}^{B} \tilde{w}(X^{(i)}; y_{ori}, \theta) \nabla_\theta f(X^{(i)}, y_{ori}; \theta) &\xrightarrow{a.s.} \mathbb{E}_{X \sim p_{adv}} \left[ \tilde{w}(X; y_{ori}, \theta) \nabla_\theta f(X, y_{ori}; \theta) \right] \\
    \frac{1}{B} \sum_{i=1}^{B} \tilde{w}(X^{(i)}; y_{ori}, \theta) &\xrightarrow{a.s.} \mathbb{E}_{X \sim p_{adv}} \left[ \tilde{w}(X; y_{ori}, \theta) \right]
\end{aligned}
\end{equation*}

Recall the definitions from Section 3. We can express the product of the sampling density and the unnormalized weight as:
\begin{equation*}
\begin{aligned}
    p_{adv}(X; x_{ori}, y_{ori}, \theta) \tilde{w}(X; y_{ori}, \theta) &= \frac{p_{dis}(X; x_{ori}) \exp(c_2 f(X, y_{ori}; \theta))}{Z_{adv} Z_{vic}} \exp(-c_2 f(X, y_{ori}; \theta)) \\
    &= \frac{p_{dis}(X; x_{ori})}{Z_{adv} Z_{vic}}
\end{aligned}
\end{equation*}

Substituting this identity into the expectation of the numerator yields:
\begin{equation*}
\begin{aligned}
    \mathbb{E}_{X \sim p_{adv}} \left[ \tilde{w}(X; y_{ori}, \theta) \nabla_\theta f(X, y_{ori}; \theta) \right] &= \int \frac{p_{dis}(x; x_{ori})}{Z_{adv} Z_{vic}} \nabla_\theta f(x, y_{ori}; \theta) dx \\
    &= \frac{1}{Z_{adv} Z_{vic}} \mathbb{E}_{X \sim p_{dis}} \left[ \nabla_\theta f(X, y_{ori}; \theta) \right]
\end{aligned}
\end{equation*}

Applying the same substitution to the denominator:
\begin{equation*}
    \mathbb{E}_{X \sim p_{adv}} \left[ \tilde{w}(X; y_{ori}, \theta) \right] = \int \frac{p_{dis}(x; x_{ori})}{Z_{adv} Z_{vic}} dx = \frac{1}{Z_{adv} Z_{vic}}
\end{equation*}

By the Continuous Mapping Theorem (or Slutsky's Theorem), the limit of the ratio is the ratio of their limits. Notice that the intractable constant term perfectly cancels out:
\begin{equation*}
    \hat{g}_B(\theta) \xrightarrow{a.s.} \frac{\frac{1}{Z_{adv} Z_{vic}} \mathbb{E}_{X \sim p_{dis}} \left[ \nabla_\theta f(X, y_{ori}; \theta) \right]}{\frac{1}{Z_{adv} Z_{vic}}} = \mathbb{E}_{X \sim p_{dis}(\cdot; x_{ori})} \left[ \nabla_\theta f(X, y_{ori}; \theta) \right]
\end{equation*}
This confirms that the self-normalized gradient is asymptotically unbiased and converges to the true gradient of our theoretical objective.
\end{proof}

\section{Theoretical Analysis of the Batch-Level Estimator}
\label{app:batch}
Theorem~\ref{thm:snis} establishes consistency of the SNIS gradient estimator in an idealized regime: a single clean image $x_{\mathrm{ori}}$
with label $y_{\mathrm{ori}}$, a batch of $B$ perturbations from its adversarial distribution $p_{\mathrm{adv}}(\cdot;x_{\mathrm{ori}},y_{\mathrm{ori}},\theta)$.
Algorithm~\ref{alg:full_training} differs in two respects: it draws a mini-batch of distinct clean images with one perturbation each, and it uses the smoothing factor $\beta$ in place of $c_2$ in the empirical weights (Eq.~(\ref{eq:wemp})).
Sec.~\ref{sec:batch} observed that restoring exactness of the joint estimator would require the intractable volume weights $w_{\mathrm{true}}^{(i)}$.
This section analyses the estimator Algorithm~2 actually computes and proves:
\begin{itemize}
    \item it is consistent, with an explicitly characterized limit (Theorem~\ref{thm:thm7});
    \item the limit is the exact gradient of an entropic-risk objective under a vulnerability-based reweighting of the data (Proposition~\ref{thm:prop8}, Corollary~\ref{thm:corollary9});
    \item this objective still lower-bounds probabilistic robustness for every admissible $\beta$, with Theorem~\ref{thm:kl_bound} recovered as the tightest member of the resulting family (Proposition~\ref{thm:prop10}, Corollary~\ref{thm:corollary11});
\end{itemize}

\noindent
\noindent\textbf{Setup.}
Let $\mathcal{X}=[0,1]^d$.
For each $y\in\mathcal{Y}$, the loss $f(\cdot,y;\theta)$
is continuous on the compact set $\mathcal{X}$,
as in Proposition~1, and
\[
0
\leq f(u,y;\theta)
\leq F_{\max}(\theta)
:=
\max_{y\in\mathcal{Y}}
\max_{u\in\mathcal{X}}
f(u,y;\theta)
<\infty.
\]
For a clean pair $(x,y)\in\mathcal{X}\times\mathcal{Y}$
and $t\geq 0$, let
\[
M_t(x,y;\theta)
:=
\mathbb{E}_{U\sim p_{\mathrm{dis}}(\cdot;x)}
\left[
    \exp\bigl(t f(U,y;\theta)\bigr)
\right].
\]
With $\beta\in[0,c_2]$ and $\lambda:=c_2-\beta$,
define the tilted distribution, volume ratio,
and reweighted data distribution:
\begin{align*}
p_\lambda(u;x,y,\theta)
&:=
\frac{
    p_{\mathrm{dis}}(u;x)
    \exp\bigl(\lambda f(u,y;\theta)\bigr)
}{
    M_\lambda(x,y;\theta)
},
\\
r(x,y;\theta)
&:=
\frac{
    M_\lambda(x,y;\theta)
}{
    M_{c_2}(x,y;\theta)
},
\\
\tilde p(x,y;\theta)
&\propto
p_{\mathrm{data}}(x,y)\,r(x,y;\theta).
\end{align*}
The last is well-defined since
\[
\mathbb{E}_{(x,y)\sim p_{\mathrm{data}}}
\left[
    r(x,y;\theta)
\right]
\in
\left[
    \exp\bigl(-\beta F_{\max}(\theta)\bigr),
    1
\right]
\]
by Lemma~\ref{thm:lemma6} (iii).
Define the entropic risk
\[
R_\lambda(x,y;\theta)
:=
\frac{1}{\lambda}
\log M_\lambda(x,y;\theta),
\qquad \lambda>0,
\]
and
\[
R_0(x,y;\theta)
:=
\mathbb{E}_{X\sim p_{\mathrm{dis}}(\cdot;x)}
\left[
    f(X,y;\theta)
\right].
\]
The map $\lambda\mapsto R_\lambda(x,y;\theta)$
is continuous at $0$.
As in Theorem~\ref{thm:snis}, the Langevin sampler is treated as an exact
sampler of $p_{\mathrm{adv}}(\cdot;x,y,\theta)$
(non-asymptotic guarantees for projected Langevin dynamics);
all statements are at a fixed $\theta$.

\begin{lemma}[Tilting and volume identities]
\label{thm:lemma6}
Fix $(x,y)\in\mathcal{X}\times\mathcal{Y}$, $\theta$,
and $\beta\in[0,c_2]$, and let $\lambda=c_2-\beta$.
Then:
\begin{enumerate}
    \item[(i)]
    For every $t\geq 0$,
    \[
    1
    \leq M_t(x,y;\theta)
    \leq \exp\bigl(t F_{\max}(\theta)\bigr),
    \]
    and $p_t(\cdot;x,y,\theta)$ is a valid probability density.
    In particular,
    \[
    p_0(\cdot;x,y,\theta)
    =
    p_{\mathrm{dis}}(\cdot;x),
    \qquad
    p_{c_2}(\cdot;x,y,\theta)
    =
    p_{\mathrm{adv}}(\cdot;x,y,\theta),
    \]
    with
    \[
    Z_{\mathrm{adv}}Z_{\mathrm{vic}}
    =
    M_{c_2}(x,y;\theta),
    \]
    which is the identity used in Section~\ref{sec:batch}.

    \item[(ii)]
    Pointwise on $\mathcal{X}$,
    \[
    p_{\mathrm{adv}}(u;x,y,\theta)
    \exp\bigl(-\beta f(u,y;\theta)\bigr)
    =
    r(x,y;\theta)\,p_\lambda(u;x,y,\theta).
    \]

    \item[(iii)]
    \[
    \exp\bigl(-\beta F_{\max}(\theta)\bigr)
    \leq r(x,y;\theta)
    \leq 1,
    \]
    and
    \[
    r(x,y;\theta)^{-1}
    =
    \mathbb{E}_{U\sim p_\lambda(\cdot;x,y,\theta)}
    \left[
        \exp\bigl(\beta f(U,y;\theta)\bigr)
    \right].
    \]
\end{enumerate}
\end{lemma}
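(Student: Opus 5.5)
The plan is to prove the three parts in order, using only the pointwise bounds $0\le f\le F_{\max}(\theta)$ from the Setup and the definitions of $p_{\mathrm{adv}}$, $p_{\mathrm{vic}}$, $M_t$, $p_\lambda$ and $r$. Each identity reduces to algebra on exponentials plus one integration against $p_{\mathrm{dis}}$. No limiting argument is needed, because everything is bounded on the compact set $\mathcal{X}$.

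\textbf{Part (i).}
\begin{itemize}
\item For $t\ge 0$, monotonicity of $\exp$ gives $1\le \exp(t f(u,y;\theta))\le \exp(tF_{\max}(\theta))$ pointwise.
\item Integrating against the probability density $p_{\mathrm{dis}}(\cdot;x)$ gives the two-sided bound on $M_t$.
\item Hence $M_t\in(0,\infty)$, and $p_t$ is nonnegative, measurable and integrates to one by construction.
\item Setting $t=0$ makes the exponential identically $1$ and $M_0=1$, so $p_0=p_{\mathrm{dis}}$.
\item For $t=c_2$, I write out $p_{\mathrm{adv}}=p_{\mathrm{dis}}p_{\mathrm{vic}}/Z_{\mathrm{adv}}$, using the normalization convention implicit in Proposition 3, where $Z_{\mathrm{adv}}p_{\mathrm{dis}}/(p_{\mathrm{dis}}p_{\mathrm{vic}})$ is the importance weight. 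With $p_{\mathrm{vic}}=\exp(c_2 f)/Z_{\mathrm{vic}}$ this becomes $p_{\mathrm{adv}}=p_{\mathrm{dis}}\exp(c_2 f)/(Z_{\mathrm{adv}}Z_{\mathrm{vic}})$.
\item Integrating to one forces $Z_{\mathrm{adv}}Z_{\mathrm{vic}}=\int p_{\mathrm{dis}}\exp(c_2 f)=M_{c_2}$, and then $p_{c_2}=p_{\mathrm{adv}}$ follows.
\end{itemize}
The only subtle point in the whole lemma is fixing this normalization convention for $Z_{\mathrm{adv}}$. I will state it explicitly as $Z_{\mathrm{adv}}=\int p_{\mathrm{dis}}p_{\mathrm{vic}}$. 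This is consistent with the expectation identity $\mathbb{E}_{p_{\mathrm{adv}}}[Z_{\mathrm{adv}}/p_{\mathrm{vic}}]=1$ used in the proof of Proposition 3.

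\textbf{Part (ii).}
\begin{itemize}
\item Using (i), $p_{\mathrm{adv}}(u)\exp(-\beta f(u)) = p_{\mathrm{dis}}(u)\exp((c_2-\beta)f(u))/M_{c_2}$.
\item Since $c_2-\beta=\lambda$, this equals $p_{\mathrm{dis}}(u)\exp(\lambda f(u))/M_{c_2}$.
\item Multiplying and dividing by $M_\lambda$ gives $(M_\lambda/M_{c_2})\,p_\lambda(u)=r\,p_\lambda(u)$.
\end{itemize}

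\textbf{Part (iii).}
\begin{itemize}
\item For the second identity, compute $\mathbb{E}_{p_\lambda}[\exp(\beta f)] = \int p_{\mathrm{dis}}\exp((\lambda+\beta)f)/M_\lambda = M_{c_2}/M_\lambda = r^{-1}$.
\item For the bounds, use $0\le \beta f\le \beta F_{\max}$ pointwise, so $1\le \exp(\beta f)\le \exp(\beta F_{\max})$.
\item Taking the $p_\lambda$-expectation gives $1\le r^{-1}\le \exp(\beta F_{\max})$.
\item Inverting gives $\exp(-\beta F_{\max})\le r\le 1$.
\end{itemize}

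Finally, averaging this bound over $p_{\mathrm{data}}$ gives the stated range for $\mathbb{E}_{p_{\mathrm{data}}}[r]$ in the Setup, which justifies that $\tilde p$ is well defined.
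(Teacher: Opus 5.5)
Your proof is correct and follows the paper's argument essentially step for step, including making the convention $Z_{\mathrm{adv}}=\int p_{\mathrm{dis}}p_{\mathrm{vic}}$ explicit so that $Z_{\mathrm{adv}}Z_{\mathrm{vic}}=M_{c_2}$. The only cosmetic difference is in (iii): you read the bounds on $r$ off the identity $r^{-1}=\mathbb{E}_{p_\lambda}[\exp(\beta f)]$, whereas the paper compares $M_\lambda$ and $M_{c_2}$ directly; both rest on the same pointwise inequality $1\le \exp(\beta f)\le \exp(\beta F_{\max}(\theta))$, integrated against proportional measures.
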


\begin{proof}
\textup{(i)}
The function $\exp\bigl(t f(\cdot,y;\theta)\bigr)$
is continuous on the compact set $\mathcal{X}$,
with values in
$[1,\exp(t F_{\max}(\theta))]$.
Hence
\[
1
\leq M_t(x,y;\theta)
\leq \exp\bigl(t F_{\max}(\theta)\bigr),
\]
and $p_t(\cdot;x,y,\theta)$ integrates to one.
The case $t=0$ gives
\[
p_0(\cdot;x,y,\theta)
=
p_{\mathrm{dis}}(\cdot;x).
\]
Since
\[
p_{\mathrm{adv}}(u;x,y,\theta)
=
\frac{
    p_{\mathrm{dis}}(u;x)
    \exp\bigl(c_2 f(u,y;\theta)\bigr)
}{
    Z_{\mathrm{adv}}Z_{\mathrm{vic}}
},
\]
integrating over $\mathcal{X}$ gives
\[
Z_{\mathrm{adv}}Z_{\mathrm{vic}}
=
M_{c_2}(x,y;\theta),
\]
and therefore
\[
p_{\mathrm{adv}}(\cdot;x,y,\theta)
=
p_{c_2}(\cdot;x,y,\theta).
\]

\textup{(ii)}
Using \textup{(i)} and $\lambda=c_2-\beta$,
\begin{align*}
& p_{\mathrm{adv}}(u;x,y,\theta)
  \exp\bigl(-\beta f(u,y;\theta)\bigr)
\\
&\qquad =
\frac{
    p_{\mathrm{dis}}(u;x)
    \exp\bigl(\lambda f(u,y;\theta)\bigr)
}{
    M_{c_2}(x,y;\theta)
}
\\
&\qquad =
\frac{
    M_\lambda(x,y;\theta)
}{
    M_{c_2}(x,y;\theta)
}
\cdot
\frac{
    p_{\mathrm{dis}}(u;x)
    \exp\bigl(\lambda f(u,y;\theta)\bigr)
}{
    M_\lambda(x,y;\theta)
}
\\
&\qquad =
r(x,y;\theta)\,p_\lambda(u;x,y,\theta).
\end{align*}

\textup{(iii)}
Since $f(u,y;\theta)\geq 0$ and $\lambda\leq c_2$,
\[
\exp\bigl(\lambda f(u,y;\theta)\bigr)
\leq
\exp\bigl(c_2 f(u,y;\theta)\bigr).
\]
Thus
\[
M_\lambda(x,y;\theta)
\leq
M_{c_2}(x,y;\theta),
\qquad
r(x,y;\theta)\leq 1.
\]
Conversely,
\begin{align*}
M_{c_2}(x,y;\theta)
&=
\mathbb{E}_{U\sim p_{\mathrm{dis}}(\cdot;x)}
\left[
    \exp\bigl(\lambda f(U,y;\theta)\bigr)
    \exp\bigl(\beta f(U,y;\theta)\bigr)
\right]
\\
&\leq
\exp\bigl(\beta F_{\max}(\theta)\bigr)
M_\lambda(x,y;\theta),
\end{align*}
so
\[
r(x,y;\theta)
\geq
\exp\bigl(-\beta F_{\max}(\theta)\bigr).
\]
Finally,
\[
\mathbb{E}_{U\sim p_\lambda(\cdot;x,y,\theta)}
\left[
    \exp\bigl(\beta f(U,y;\theta)\bigr)
\right]
=
\frac{
    M_{c_2}(x,y;\theta)
}{
    M_\lambda(x,y;\theta)
}
=
r(x,y;\theta)^{-1}.
\]
\end{proof}

Part~(iii) makes the mechanism transparent:
the local adversarial volume
$Z_{\mathrm{adv}}Z_{\mathrm{vic}}=M_{c_2}(x,y;\theta)$
omitted from the empirical weights resurfaces in the large-batch
limit as the per-image factor
\[
r(x,y;\theta)
=
\frac{
    M_\lambda(x,y;\theta)
}{
    M_{c_2}(x,y;\theta)
},
\]
which down-weights a clean pair exactly according to the
exponential moment of its loss over its perturbation neighborhood.
This is the precise form of the implicit regularization
described informally in Section~\ref{sec:batch}.

\begin{theorem}[Batch-level consistency of the empirical gradient estimator]
\label{thm:thm7}
Let
\[
(x^{(i)},y^{(i)})
\overset{\mathrm{i.i.d.}}{\sim}
p_{\mathrm{data}},
\qquad i=1,\ldots,B,
\]
and, conditionally on these clean pairs, let
\[
X^{(i)}
\sim
p_{\mathrm{adv}}
\bigl(\cdot;x^{(i)},y^{(i)},\theta\bigr)
\]
independently across $i$.
Equivalently, the tuples
$(x^{(i)},y^{(i)},X^{(i)})$ are i.i.d.\ from the joint density
\[
\pi(x,y,u)
:=
p_{\mathrm{data}}(x,y)\,
p_{\mathrm{adv}}(u;x,y,\theta).
\]
Let
\[
\hat g_B^{\mathrm{emp}}(\theta)
=
\sum_{i=1}^{B}
\hat w_{\mathrm{emp}}^{(i)}
\nabla_\theta f(X^{(i)},y^{(i)};\theta)
\]
be the update of Lines~16-19 of Algorithm~\ref{alg:full_training}, with
self-normalized weights
\[
\hat w_{\mathrm{emp}}^{(i)}
=
\frac{
    \tilde w_{\mathrm{emp}}^{(i)}
}{
    \sum_{j=1}^{B}\tilde w_{\mathrm{emp}}^{(j)}
},
\qquad
\tilde w_{\mathrm{emp}}^{(i)}
:=
\exp\bigl(-\beta f(X^{(i)},y^{(i)};\theta)\bigr),
\]
treated as constants during backpropagation
(stop-gradient, Line~17).
If
\[
\mathbb{E}_{(x,y)\sim p_{\mathrm{data}}}
\left[
    \mathbb{E}_{X\sim p_{\mathrm{adv}}(\cdot;x,y,\theta)}
    \left[
        \left\|\nabla_\theta f(X,y;\theta)\right\|
    \right]
\right]
<\infty,
\]
then, as $B\to\infty$,
\[
\hat g_B^{\mathrm{emp}}(\theta)
\xrightarrow{\mathrm{a.s.}}
\mathbb{E}_{(x,y)\sim\tilde p(\cdot,\cdot;\theta)}
\left[
    \mathbb{E}_{X\sim p_\lambda(\cdot;x,y,\theta)}
    \left[
        \nabla_\theta f(X,y;\theta)
    \right]
\right].
\]
\end{theorem}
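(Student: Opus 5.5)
The plan mirrors the proof of Theorem~\ref{thm:snis}, now applied to the joint density $\pi$ rather than to $p_{\mathrm{adv}}$ for a single image. Write
\[
\hat g_B^{\mathrm{emp}}(\theta)=\frac{\frac1B\sum_{i=1}^B \tilde w_{\mathrm{emp}}^{(i)}\nabla_\theta f(X^{(i)},y^{(i)};\theta)}{\frac1B\sum_{i=1}^B \tilde w_{\mathrm{emp}}^{(i)}}.
\]
Because the tuples $(x^{(i)},y^{(i)},X^{(i)})$ are i.i.d.\ from $\pi$, the Strong Law of Large Numbers applies separately to the numerator and the denominator. The two averages converge almost surely to $N:=\mathbb{E}_\pi[\exp(-\beta f(X,y;\theta))\nabla_\theta f(X,y;\theta)]$ and $D:=\mathbb{E}_\pi[\exp(-\beta f(X,y;\theta))]$, respectively.

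Integrability is the first thing to check. Since $f\ge 0$ and $\beta\ge 0$, the weight satisfies $0<\exp(-\beta f)\le 1$, so the numerator integrand is dominated by $\|\nabla_\theta f\|$. That quantity is integrable under $\pi$ by the stated hypothesis. The denominator integrand is bounded, so it is trivially integrable.

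Next I identify the two limits. I integrate out $u$ conditionally on $(x,y)$ and apply Lemma~\ref{thm:lemma6}(ii). Pointwise we have $p_{\mathrm{adv}}(u;x,y,\theta)\exp(-\beta f(u,y;\theta))=r(x,y;\theta)\,p_\lambda(u;x,y,\theta)$. This gives
\[
N=\mathbb{E}_{(x,y)\sim p_{\mathrm{data}}}\!\left[r(x,y;\theta)\,\mathbb{E}_{X\sim p_\lambda}[\nabla_\theta f(X,y;\theta)]\right],
\qquad
D=\mathbb{E}_{(x,y)\sim p_{\mathrm{data}}}[r(x,y;\theta)],
\]
where for $D$ I use that $p_\lambda$ integrates to one (Lemma~\ref{thm:lemma6}(i)). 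Fubini/Tonelli justifies the interchange, using the same domination as above. By Lemma~\ref{thm:lemma6}(iii), $D\ge \exp(-\beta F_{\max}(\theta))>0$.

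Finally, the map $(a,b)\mapsto a/b$ is continuous at $(N,D)$ because $D>0$, so $\hat g_B^{\mathrm{emp}}\to N/D$ almost surely. By the definition $\tilde p\propto p_{\mathrm{data}}\,r$, the ratio $N/D$ is exactly $\mathbb{E}_{(x,y)\sim\tilde p}\big[\mathbb{E}_{X\sim p_\lambda}[\nabla_\theta f(X,y;\theta)]\big]$, which is the claimed limit.

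The only delicate step is the conditional integration inside $N$. Fubini needs the vector-valued integrand to be absolutely integrable, which holds by the domination noted above. One should also note that the inner expectation under $p_\lambda$ is finite for $p_{\mathrm{data}}$-a.e.\ $(x,y)$ with $r>0$, since $r\,p_\lambda\le p_{\mathrm{adv}}$ pointwise. Once that is in place, the rest is algebra already packaged in Lemma~\ref{thm:lemma6}.
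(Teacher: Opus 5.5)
Your proposal is correct and follows the paper's proof step for step. Both write the estimator as a ratio of sample means, apply the SLLN to the numerator (dominated by $\|\nabla_\theta f\|$) and to the bounded denominator, identify both limits via Lemma~\ref{thm:lemma6}(ii) and the tower property, get $D\ge\exp(-\beta F_{\max}(\theta))>0$ from Lemma~\ref{thm:lemma6}(iii), and conclude with the continuous mapping theorem and the normalizing constant of $\tilde p$. Your added Fubini and finiteness remark is a harmless refinement; the qualifier ``with $r>0$'' is automatic, since Lemma~\ref{thm:lemma6}(iii) gives $r\ge\exp(-\beta F_{\max}(\theta))$ everywhere.
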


\begin{proof}
Write $\hat g_B^{\mathrm{emp}}(\theta)$ as a ratio of sample means:
\[
\hat g_B^{\mathrm{emp}}(\theta)
=
\frac{
    \frac{1}{B}
    \sum_{i=1}^{B}
    \tilde w_{\mathrm{emp}}^{(i)}
    \nabla_\theta f(X^{(i)},y^{(i)};\theta)
}{
    \frac{1}{B}
    \sum_{j=1}^{B}
    \tilde w_{\mathrm{emp}}^{(j)}
}.
\]
Only independence across tuples is used.
It holds because the clean pairs are i.i.d.\ and, conditionally
on them, the $B$ Langevin chains evolve independently:
the batch energy in Line~11 is separable across $i$,
so its gradient decouples blockwise, and the noise in Line~9
is independent across $i$.
The within-tuple dependence is exactly what the factorization
\[
\pi(x,y,u)
=
p_{\mathrm{data}}(x,y)\,
p_{\mathrm{adv}}(u;x,y,\theta)
\]
encodes.

\medskip
\noindent\textit{Denominator.}
Since
\[
\tilde w_{\mathrm{emp}}^{(j)}
\in
\left[
    \exp\bigl(-\beta F_{\max}(\theta)\bigr),
    1
\right],
\]
the weights are integrable.
By the Strong Law of Large Numbers (SLLN), the tower property,
and Lemma~\ref{thm:lemma6} (ii),
\begin{align*}
\frac{1}{B}
\sum_{j=1}^{B}
\tilde w_{\mathrm{emp}}^{(j)}
&\xrightarrow{\mathrm{a.s.}}
\mathbb{E}_{(x,y,X)\sim\pi}
\left[
    \exp\bigl(-\beta f(X,y;\theta)\bigr)
\right]
\\
&=
\mathbb{E}_{(x,y)\sim p_{\mathrm{data}}}
\left[
    \mathbb{E}_{X\sim p_{\mathrm{adv}}(\cdot;x,y,\theta)}
    \left[
        \exp\bigl(-\beta f(X,y;\theta)\bigr)
    \right]
\right]
\\
&=
\mathbb{E}_{(x,y)\sim p_{\mathrm{data}}}
\left[
    r(x,y;\theta)
\right]
\\
&\geq
\exp\bigl(-\beta F_{\max}(\theta)\bigr)
>0,
\end{align*}
where the lower bound follows from Lemma~\ref{thm:lemma6} (iii).

\medskip
\noindent\textit{Numerator.}
Since
\[
\left\|
    \tilde w_{\mathrm{emp}}^{(i)}
    \nabla_\theta f(X^{(i)},y^{(i)};\theta)
\right\|
\leq
\left\|
    \nabla_\theta f(X^{(i)},y^{(i)};\theta)
\right\|,
\]
the integrability assumption allows the SLLN to be applied
componentwise:
\begin{align*}
&\frac{1}{B}
\sum_{i=1}^{B}
\tilde w_{\mathrm{emp}}^{(i)}
\nabla_\theta f(X^{(i)},y^{(i)};\theta)
\\
&\qquad\xrightarrow{\mathrm{a.s.}}
\mathbb{E}_{(x,y,X)\sim\pi}
\left[
    \exp\bigl(-\beta f(X,y;\theta)\bigr)
    \nabla_\theta f(X,y;\theta)
\right].
\end{align*}
By the tower property and Lemma~\ref{thm:lemma6} (ii),
\begin{align*}
&\mathbb{E}_{X\sim p_{\mathrm{adv}}(\cdot;x,y,\theta)}
\left[
    \exp\bigl(-\beta f(X,y;\theta)\bigr)
    \nabla_\theta f(X,y;\theta)
\right]
\\
&\qquad=
r(x,y;\theta)\,
\mathbb{E}_{X\sim p_\lambda(\cdot;x,y,\theta)}
\left[
    \nabla_\theta f(X,y;\theta)
\right].
\end{align*}
Thus the numerator converges almost surely to
\[
\mathbb{E}_{(x,y)\sim p_{\mathrm{data}}}
\left[
    r(x,y;\theta)\,
    \mathbb{E}_{X\sim p_\lambda(\cdot;x,y,\theta)}
    \left[
        \nabla_\theta f(X,y;\theta)
    \right]
\right].
\]

\medskip
\noindent\textit{Ratio.}
On the probability-one event where both limits hold,
the denominator tends to a strictly positive constant
by Lemma~\ref{thm:lemma6} (iii).
The continuous mapping theorem therefore gives
\begin{align*}
\hat g_B^{\mathrm{emp}}(\theta)
&\xrightarrow{\mathrm{a.s.}}
\frac{
    \mathbb{E}_{(x,y)\sim p_{\mathrm{data}}}
    \left[
        r(x,y;\theta)\,
        \mathbb{E}_{X\sim p_\lambda(\cdot;x,y,\theta)}
        \left[
            \nabla_\theta f(X,y;\theta)
        \right]
    \right]
}{
    \mathbb{E}_{(x,y)\sim p_{\mathrm{data}}}
    \left[
        r(x,y;\theta)
    \right]
}
\\
&=
\int_{\mathcal{X}\times\mathcal{Y}}
\frac{
    p_{\mathrm{data}}(x,y)\,r(x,y;\theta)
}{
    \mathbb{E}_{(x',y')\sim p_{\mathrm{data}}}
    \left[
        r(x',y';\theta)
    \right]
}
\\
&\qquad\qquad{}\times
\mathbb{E}_{X\sim p_\lambda(\cdot;x,y,\theta)}
\left[
    \nabla_\theta f(X,y;\theta)
\right]
\,\mathrm{d}(x,y)
\\
&=
\mathbb{E}_{(x,y)\sim\tilde p(\cdot,\cdot;\theta)}
\left[
    \mathbb{E}_{X\sim p_\lambda(\cdot;x,y,\theta)}
    \left[
        \nabla_\theta f(X,y;\theta)
    \right]
\right],
\end{align*}
where the last equality holds because the normalizing constant of
\[
\tilde p(x,y;\theta)
\propto
p_{\mathrm{data}}(x,y)\,r(x,y;\theta)
\]
is exactly
\[
\mathbb{E}_{(x,y)\sim p_{\mathrm{data}}}
\left[
    r(x,y;\theta)
\right].
\]
\end{proof}

\noindent\textbf{Relation to Theorem~\ref{thm:snis}.}
\begin{enumerate}
    \item[(i)]
    The integrability assumption is implied by the data-averaged
    assumption of Theorem~\ref{thm:snis}.
    By Lemma~\ref{thm:lemma6} (i),
    \[
    \frac{
        p_{\mathrm{adv}}(u;x,y,\theta)
    }{
        p_{\mathrm{dis}}(u;x)
    }
    =
    \frac{
        \exp\bigl(c_2 f(u,y;\theta)\bigr)
    }{
        M_{c_2}(x,y;\theta)
    }
    \leq
    \exp\bigl(c_2 F_{\max}(\theta)\bigr),
    \]
    since $M_{c_2}(x,y;\theta)\geq 1$.
    Hence
    \begin{align*}
    &\mathbb{E}_{(x,y,X)\sim\pi}
    \left[
        \left\|\nabla_\theta f(X,y;\theta)\right\|
    \right]
    \\
    &\qquad\leq
    \exp\bigl(c_2 F_{\max}(\theta)\bigr)\,
    \mathbb{E}_{(x,y)\sim p_{\mathrm{data}}}
    \left[
        \mathbb{E}_{X\sim p_{\mathrm{dis}}(\cdot;x)}
        \left[
            \left\|\nabla_\theta f(X,y;\theta)\right\|
        \right]
    \right].
    \end{align*}

    \item[(ii)]
    \textbf{Theorem~\ref{thm:thm7} strictly generalizes Theorem~\ref{thm:snis}:}
    with a single clean pair
    $(x_{\mathrm{ori}},y_{\mathrm{ori}})$ and $\beta=c_2$,
    the factor $r(x_{\mathrm{ori}},y_{\mathrm{ori}};\theta)$
    is a common constant that cancels in the self-normalized ratio.
    Moreover, $\lambda=0$ and
    \[
    p_0(\cdot;x_{\mathrm{ori}},y_{\mathrm{ori}},\theta)
    =
    p_{\mathrm{dis}}(\cdot;x_{\mathrm{ori}}).
    \]
    The limit therefore reduces to
    \[
    \mathbb{E}_{X\sim p_{\mathrm{dis}}(\cdot;x_{\mathrm{ori}})}
    \left[
        \nabla_\theta f(X,y_{\mathrm{ori}};\theta)
    \right],
    \]
    which is the conclusion of Theorem~\ref{thm:snis}.

    \item[(iii)]
    At the other endpoint $\beta=0$, the weights are uniform:
    \[
    \hat w_{\mathrm{emp}}^{(i)}=\frac{1}{B}.
    \]
    In this case,
    \[
    r(x,y;\theta)=1,
    \qquad
    \tilde p(x,y;\theta)=p_{\mathrm{data}}(x,y),
    \]
    and
    \[
    p_\lambda(\cdot;x,y,\theta)
    =
    p_{\mathrm{adv}}(\cdot;x,y,\theta).
    \]
    This is unweighted adversarial training on Langevin samples,
    i.e., exactly the PAT(WOS) variant of Section~\ref{sec:ablation}.

    \item[(iv)]
    Had the intractable weights
    \[
    w_{\mathrm{true}}^{(i)}
    \propto
    Z_{\mathrm{adv}}^{(i)}Z_{\mathrm{vic}}^{(i)}
    \exp\bigl(-c_2 f(X^{(i)},y^{(i)};\theta)\bigr)
    \]
    of Section~\ref{sec:batch} been used, the same argument, using the identity
    \[
    p_{\mathrm{adv}}(u;x,y,\theta)\,
    Z_{\mathrm{adv}}Z_{\mathrm{vic}}\,
    \exp\bigl(-c_2 f(u,y;\theta)\bigr)
    =
    p_{\mathrm{dis}}(u;x)
    \]
    from Lemma~\ref{thm:lemma6}, would give the limit
    \[
    \mathbb{E}_{(x,y)\sim p_{\mathrm{data}}}
    \left[
        \mathbb{E}_{X\sim p_{\mathrm{dis}}(\cdot;x)}
        \left[
            \nabla_\theta f(X,y;\theta)
        \right]
    \right],
    \]
    the exact joint analogue of Theorem~\ref{thm:snis}.
\end{enumerate}

\begin{proposition}[Entropic-risk gradient identity]
\label{thm:prop8}
Fix $(x,y)\in\mathcal{X}\times\mathcal{Y}$.
Assume, as in Proposition~\ref{thm:prop4}, that $\nabla_\theta$ may be
exchanged with integration over $\mathcal{X}$.
Then, for $\lambda>0$,
\[
\nabla_\theta R_\lambda(x,y;\theta)
=
\mathbb{E}_{X\sim p_\lambda(\cdot;x,y,\theta)}
\left[
    \nabla_\theta f(X,y;\theta)
\right].
\]
The identity extends to $\lambda=0$, with
\[
R_0(x,y;\theta)
=
\mathbb{E}_{X\sim p_{\mathrm{dis}}(\cdot;x)}
\left[
    f(X,y;\theta)
\right],
\qquad
p_0(\cdot;x,y,\theta)
=
p_{\mathrm{dis}}(\cdot;x).
\]
\end{proposition}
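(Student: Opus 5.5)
For $\lambda>0$ the plan is to differentiate $R_\lambda(x,y;\theta)=\lambda^{-1}\log M_\lambda(x,y;\theta)$ by the chain rule, so that
\[
\nabla_\theta R_\lambda(x,y;\theta)=\frac{\nabla_\theta M_\lambda(x,y;\theta)}{\lambda\,M_\lambda(x,y;\theta)} .
\]
This is legitimate because $M_\lambda\geq 1$ by Lemma~\ref{thm:lemma6}~(i), so $\log$ is smooth at $M_\lambda$ and the quotient is well defined.

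Next I compute $\nabla_\theta M_\lambda$. The density $p_{\mathrm{dis}}(\cdot;x)$ does not depend on $\theta$, and by the stated assumption $\nabla_\theta$ may be exchanged with integration over $\mathcal{X}$. Hence
\[
\nabla_\theta M_\lambda(x,y;\theta)=\int_{\mathcal{X}} p_{\mathrm{dis}}(u;x)\,\lambda\exp\bigl(\lambda f(u,y;\theta)\bigr)\nabla_\theta f(u,y;\theta)\,du .
\]
Dividing by $\lambda M_\lambda$ cancels the factor $\lambda$. The remaining weight $p_{\mathrm{dis}}(u;x)\exp(\lambda f(u,y;\theta))/M_\lambda(x,y;\theta)$ is exactly $p_\lambda(u;x,y,\theta)$, which Lemma~\ref{thm:lemma6}~(i) shows is a valid probability density. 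So the quotient equals $\mathbb{E}_{X\sim p_\lambda}[\nabla_\theta f(X,y;\theta)]$, which is the claim for $\lambda>0$.

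For $\lambda=0$ the statement is the direct computation already made in Proposition~\ref{thm:prop4}. Since $R_0=\mathbb{E}_{p_{\mathrm{dis}}}[f]$ and $p_{\mathrm{dis}}$ is independent of $\theta$, exchanging gradient and integral gives $\nabla_\theta R_0=\mathbb{E}_{p_{\mathrm{dis}}}[\nabla_\theta f]$. This coincides with the $\lambda>0$ formula evaluated at $\lambda=0$, because $p_0=p_{\mathrm{dis}}$ by Lemma~\ref{thm:lemma6}~(i).

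Optionally, I would also note that the right-hand side is continuous in $\lambda$ at $0$. Since $f\leq F_{\max}(\theta)$, the tilted density satisfies $p_\lambda\to p_{\mathrm{dis}}$ pointwise with $p_\lambda\leq e^{\lambda F_{\max}}p_{\mathrm{dis}}$. Dominated convergence, using integrability of $\|\nabla_\theta f\|$ under $p_{\mathrm{dis}}$, then shows the identity is the continuous extension of the $\lambda>0$ case, consistent with continuity of $\lambda\mapsto R_\lambda$.

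The main obstacle is only the justification for interchanging gradient and integral, and the statement assumes this outright. The one thing left to check is that the same interchange hypothesis covers the integrand $e^{\lambda f}\nabla_\theta f$ as well as $\nabla_\theta f$. This holds because $e^{\lambda f}\in[1,e^{\lambda F_{\max}}]$ is bounded, so domination of $\nabla_\theta f$ transfers to $e^{\lambda f}\nabla_\theta f$ up to the constant factor $e^{\lambda F_{\max}}$.
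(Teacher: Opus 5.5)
Your proposal is correct and follows the paper's proof. Both arguments differentiate $\log M_\lambda$ under the integral, using that $p_{\mathrm{dis}}(\cdot;x)$ is $\theta$-independent, recognize the weight $p_{\mathrm{dis}}e^{\lambda f}/M_\lambda$ as $p_\lambda$, and divide by $\lambda$; your explicit treatment of $\lambda=0$ via $p_0=p_{\mathrm{dis}}$ and your dominated-convergence continuity remark are sound additions that the paper leaves implicit.
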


\begin{proof}
Since $p_{\mathrm{dis}}(\cdot;x)$ does not depend on $\theta$,
\begin{align*}
\nabla_\theta \log M_\lambda(x,y;\theta)
&=
\frac{\lambda}{M_\lambda(x,y;\theta)}
\int_{\mathcal{X}}
p_{\mathrm{dis}}(u;x)\,
\nabla_\theta f(u,y;\theta)\,
\exp\bigl(\lambda f(u,y;\theta)\bigr)
\,\mathrm{d}u
\\
&=
\lambda\,
\mathbb{E}_{X\sim p_\lambda(\cdot;x,y,\theta)}
\left[
    \nabla_\theta f(X,y;\theta)
\right].
\end{align*}
Dividing by $\lambda$ gives the claim for $\lambda>0$.
\end{proof}

\begin{corollary}[What PAT optimizes in practice]
\label{thm:corollary9}
Combining Theorem~\ref{thm:thm7} and Proposition~\ref{thm:prop8}, as $B\to\infty$,
\[
\hat g_B^{\mathrm{emp}}(\theta)
\xrightarrow{\mathrm{a.s.}}
\left.
\nabla_{\theta'}
\mathbb{E}_{(x,y)\sim\tilde p(\cdot,\cdot;\theta)}
\left[
    R_\lambda(x,y;\theta')
\right]
\right|_{\theta'=\theta},
\]
where the gradient passes inside the outer expectation because
$\tilde p(\cdot,\cdot;\theta)$ does not depend on $\theta'$.
\end{corollary}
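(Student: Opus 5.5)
The plan is to chain Theorem~\ref{thm:thm7} and Proposition~\ref{thm:prop8} and then justify moving $\nabla_{\theta'}$ outside the outer expectation over $\tilde p(\cdot,\cdot;\theta)$.

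First, I invoke Theorem~\ref{thm:thm7} under its integrability hypothesis. It gives, almost surely,
\[
\hat g_B^{\mathrm{emp}}(\theta)\to \mathbb{E}_{(x,y)\sim\tilde p(\cdot,\cdot;\theta)}\bigl[\mathbb{E}_{X\sim p_\lambda(\cdot;x,y,\theta)}[\nabla_\theta f(X,y;\theta)]\bigr].
\]
Second, for each fixed $(x,y)$, Proposition~\ref{thm:prop8} identifies the inner expectation with $\nabla_{\theta'}R_\lambda(x,y;\theta')|_{\theta'=\theta}$. For $\lambda>0$ this comes from differentiating $\frac1\lambda\log M_\lambda$. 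For $\lambda=0$ (that is, $\beta=c_2$) it comes from the extension $R_0=\mathbb{E}_{p_{\mathrm{dis}}}[f]$ with $p_0=p_{\mathrm{dis}}$. Substituting pointwise in $(x,y)$ then rewrites the limit as
\[
\mathbb{E}_{\tilde p(\cdot,\cdot;\theta)}\bigl[\nabla_{\theta'}R_\lambda(x,y;\theta')|_{\theta'=\theta}\bigr].
\]

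Third, I interchange $\nabla_{\theta'}$ with the outer expectation. The key device is to freeze the reweighting measure at $\theta$: $\tilde p(\cdot,\cdot;\theta)$ is a fixed probability measure that does not depend on $\theta'$, so only the integrand $R_\lambda(\cdot;\theta')$ varies. This is exactly why the corollary is stated with the two-argument form. With the measure fixed, I would justify the interchange by dominated convergence, in the same spirit as the interchange assumed in Proposition~\ref{thm:prop8}.

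For the dominating function, the tilting density ratio $p_\lambda/p_{\mathrm{dis}}$ is at most $\exp(\lambda F_{\max})$ by Lemma~\ref{thm:lemma6}(i). Also, $\tilde p\le p_{\mathrm{data}}\,\exp(\beta F_{\max})$, since $r\le1$ and $\mathbb{E}_{p_{\mathrm{data}}}[r]\ge e^{-\beta F_{\max}}$. Together these give
\[
\|\nabla_{\theta'}R_\lambda\|\le e^{\lambda F_{\max}}\,\mathbb{E}_{p_{\mathrm{dis}}}\|\nabla f\|,
\]
which is integrable under the data-averaged assumption. To apply dominated convergence properly, this bound must hold locally uniformly in $\theta'$ near $\theta$, which requires $F_{\max}(\theta')$ to be locally bounded.

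The main obstacle is that local uniformity in the third step. If $F_{\max}(\theta')$ is not locally bounded, I would not derive it from scratch. Instead I would state it as part of the standing assumption that differentiation and integration may be exchanged, as the paper already does for Proposition~\ref{thm:prop8}, now applied to the joint integral over $(x,y,u)$.
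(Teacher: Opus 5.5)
Your proposal is correct and follows the paper's argument, which is just the chain written into the corollary itself. Theorem~\ref{thm:thm7} supplies the limit, Proposition~\ref{thm:prop8} identifies the inner expectation with $\nabla_{\theta'}R_\lambda(x,y;\theta')|_{\theta'=\theta}$ pointwise, and the gradient is pulled outside because $\tilde p(\cdot,\cdot;\theta)$ is frozen at $\theta$. The paper asserts that last interchange without justification, so your dominated-convergence bound is added rigor rather than a different route. As you note, it needs local uniformity in $\theta'$ of the whole majorant $e^{\lambda F_{\max}(\theta')}\,\mathbb{E}_{p_{\mathrm{dis}}}\|\nabla_{\theta'} f\|$ (not only local boundedness of $F_{\max}$), and that is ultimately absorbed into the standing exchange assumption.
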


\begin{proposition}[The smoothed objective preserves the lower bound
on probabilistic robustness]
\label{thm:prop10}
Fix a clean pair $(x_{\mathrm{ori}},y_{\mathrm{ori}})$,
and let $s$, $E_{\mathrm{adv}}$, and $\gamma^*>0$
be as in Theorem~\ref{thm:kl_bound}.
Then:
\begin{enumerate}
    \item[(i)]
    \textit{Monotone family.}
    The map
    $\lambda\mapsto
    R_\lambda(x_{\mathrm{ori}},y_{\mathrm{ori}};\theta)$
    is continuous on $[0,\infty)$, and, for $\lambda>0$,
    \begin{align*}
    \frac{\mathrm{d}}{\mathrm{d}\lambda}
    R_\lambda(x_{\mathrm{ori}},y_{\mathrm{ori}};\theta)
    =
    \frac{1}{\lambda^2}
    \operatorname{KL}\!\left(
        p_\lambda(\cdot;x_{\mathrm{ori}},y_{\mathrm{ori}},\theta)
        \,\middle\|\,
        p_{\mathrm{dis}}(\cdot;x_{\mathrm{ori}})
    \right)
    \geq 0.
    \end{align*}
    Hence, for every $\lambda\in[0,c_2]$,
    \[
    R_0(x_{\mathrm{ori}},y_{\mathrm{ori}};\theta)
    \leq
    R_\lambda(x_{\mathrm{ori}},y_{\mathrm{ori}};\theta)
    \leq
    R_{c_2}(x_{\mathrm{ori}},y_{\mathrm{ori}};\theta),
    \]
    with the exact integral representation
    \begin{align*}
    R_\lambda(x_{\mathrm{ori}},y_{\mathrm{ori}};\theta)
    -
    R_0(x_{\mathrm{ori}},y_{\mathrm{ori}};\theta)
    =
    \int_0^\lambda
    \frac{1}{t^2}
    \operatorname{KL}\!\left(
        p_t(\cdot;x_{\mathrm{ori}},y_{\mathrm{ori}},\theta)
        \,\middle\|\,
        p_{\mathrm{dis}}(\cdot;x_{\mathrm{ori}})
    \right)
    \,\mathrm{d}t.
    \end{align*}

    \item[(ii)]
    \textit{A family of lower bounds.}
    For every $\beta\in[0,c_2]$, i.e.,
    every $\lambda=c_2-\beta\in[0,c_2]$,
    \[
    \operatorname{PR}\!\left(
        p_{\mathrm{dis}}(\cdot;x_{\mathrm{ori}}),s
    \right)
    \geq
    1-
    \frac{
        c_2 R_\lambda(x_{\mathrm{ori}},y_{\mathrm{ori}};\theta)
    }{
        \gamma^*
    }.
    \]
    At $\lambda=0$ (i.e., $\beta=c_2$),
    this coincides with the bound of Theorem~2, since
    \begin{align*}
    \log Z_{\mathrm{vic}}
    -
    H\!\left(p_{\mathrm{dis}}(\cdot;x_{\mathrm{ori}})\right)&-
    \operatorname{KL}\!\left(
        p_{\mathrm{dis}}(\cdot;x_{\mathrm{ori}})
        \,\middle\|\,
        p_{\mathrm{vic}}(\cdot;y_{\mathrm{ori}},\theta)
    \right)
    \\
    &=
    \mathbb{E}_{X\sim p_{\mathrm{dis}}(\cdot;x_{\mathrm{ori}})}
    \left[
        c_2 f(X,y_{\mathrm{ori}};\theta)
    \right]
    \\
    &=
    c_2 R_0(x_{\mathrm{ori}},y_{\mathrm{ori}};\theta).
    \end{align*}
    For $\lambda>0$, the bound is looser by exactly
    \[
    \frac{c_2}{\gamma^*}
    \left(
        R_\lambda(x_{\mathrm{ori}},y_{\mathrm{ori}};\theta)
        -
        R_0(x_{\mathrm{ori}},y_{\mathrm{ori}};\theta)
    \right),
    \]
    which is quantified by the integral representation
    in \textup{(i)}.
\end{enumerate}
\end{proposition}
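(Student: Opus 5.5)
I would prove (i) by treating $\Lambda(\lambda):=\log M_\lambda(x_{\mathrm{ori}},y_{\mathrm{ori}};\theta)$ as a cumulant generating function. Write $f(u)$ for $f(u,y_{\mathrm{ori}};\theta)$.

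\textbf{Smoothness of $\Lambda$.} Since $0\le f\le F_{\max}(\theta)$ on the compact $\mathcal{X}$, the integrand $p_{\mathrm{dis}}(u)e^{\lambda f(u)}$ and its $\lambda$-derivatives $p_{\mathrm{dis}}f^k e^{\lambda f}$ are dominated by $p_{\mathrm{dis}}F_{\max}^k e^{\Lambda_0 F_{\max}}$ on any bounded interval of $\lambda$. Dominated convergence then makes $\Lambda$ smooth on $[0,\infty)$, with $\Lambda(0)=0$ and $\Lambda'(\lambda)=\mathbb{E}_{p_\lambda}[f]$.

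\textbf{Continuity at $0$.} Continuity of $R_\lambda=\Lambda(\lambda)/\lambda$ on $(0,\infty)$ is immediate. At $0$, l'H\^opital (or $\Lambda(\lambda)=\lambda\Lambda'(0)+o(\lambda)$) gives $R_\lambda\to\Lambda'(0)=\mathbb{E}_{p_{\mathrm{dis}}}[f]=R_0$.

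\textbf{Derivative formula.} For $\lambda>0$ the quotient rule gives
\[
\frac{\mathrm{d}}{\mathrm{d}\lambda}R_\lambda=\frac{\lambda\Lambda'(\lambda)-\Lambda(\lambda)}{\lambda^2}.
\]
The key identity is that, since $\log\bigl(p_\lambda/p_{\mathrm{dis}}\bigr)=\lambda f-\Lambda(\lambda)$,
\[
\operatorname{KL}(p_\lambda\|p_{\mathrm{dis}})=\mathbb{E}_{p_\lambda}[\lambda f-\Lambda(\lambda)]=\lambda\Lambda'(\lambda)-\Lambda(\lambda),
\]
so the derivative equals $\lambda^{-2}\operatorname{KL}(p_\lambda\|p_{\mathrm{dis}})\ge0$.

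\textbf{Monotonicity and integral representation.} Integrate the derivative from $\epsilon$ to $\lambda$, then let $\epsilon\to0$ using continuity at $0$. The integrand is nonnegative, so monotone convergence applies. Near $t=0$ the integrand stays bounded, because $\operatorname{KL}(p_t\|p_{\mathrm{dis}})=O(t^2)$: a Taylor expansion gives $t\Lambda'(t)-\Lambda(t)=\tfrac{t^2}{2}\Lambda''(0)+o(t^2)$, where $\Lambda''(0)$ is the variance of $f$ under $p_{\mathrm{dis}}$. Hence the integral converges and equals $R_\lambda-R_0$. Monotonicity of $R_\lambda$ then gives $R_0\le R_\lambda\le R_{c_2}$ on $[0,c_2]$.

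\textbf{Part (ii).} First verify the $\lambda=0$ identity. Expand $\operatorname{KL}(p_{\mathrm{dis}}\|p_{\mathrm{vic}})$ exactly as in the proof of Theorem~\ref{thm:kl_bound}; this gives $\log Z_{\mathrm{vic}}-H-\operatorname{KL}=\mathbb{E}_{p_{\mathrm{dis}}}[c_2f]=c_2R_0$. So Theorem~\ref{thm:kl_bound} reads $\mathrm{PR}\ge1-c_2R_0/\gamma^*$. Since $R_0\le R_\lambda$ and $c_2/\gamma^*>0$, we get $1-c_2R_0/\gamma^*\ge1-c_2R_\lambda/\gamma^*$, which yields the family of bounds. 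The gap between the two bounds is $\frac{c_2}{\gamma^*}(R_\lambda-R_0)$ by subtraction, and it is quantified by the integral representation in (i).

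\textbf{Main obstacle.} The only delicate point is the behavior at $\lambda=0$: continuity of $R_\lambda$ there, and integrability of $t^{-2}\operatorname{KL}(p_t\|p_{\mathrm{dis}})$ near $0$. Both rest on the second-order expansion of $\Lambda$, which the boundedness of $f$ on the compact $\mathcal{X}$ justifies.
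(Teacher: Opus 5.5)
Your proposal is correct and follows essentially the same route as the paper. You use the cumulant-generating-function identity $\operatorname{KL}(p_\lambda\|p_{\mathrm{dis}})=\lambda\Lambda'(\lambda)-\Lambda(\lambda)$ with the quotient rule, handle continuity and integrability at $0$ through the second-order expansion (limit $\tfrac12\operatorname{Var}_{p_{\mathrm{dis}}}[f]$), and combine $R_0\le R_\lambda$ with Theorem~\ref{thm:kl_bound}. The only cosmetic difference is in (ii): you get $R_0\le R_\lambda$ from the monotonicity established in (i), whereas the paper applies Jensen's inequality $M_\lambda\ge\exp(\lambda\,\mathbb{E}_{p_{\mathrm{dis}}}[f])$ directly; both are valid.
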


\begin{proof}
\textup{(i)}
The boundedness of $f(\cdot,y_{\mathrm{ori}};\theta)$
justifies differentiation under the integral:
\[
\frac{\mathrm{d}}{\mathrm{d}\lambda}
\log M_\lambda(x_{\mathrm{ori}},y_{\mathrm{ori}};\theta)
=
\mathbb{E}_{
    X\sim p_\lambda(\cdot;x_{\mathrm{ori}},y_{\mathrm{ori}},\theta)
}
\left[
    f(X,y_{\mathrm{ori}};\theta)
\right].
\]
Moreover,
\begin{align*}
\operatorname{KL}\!\left(
    p_\lambda(\cdot;x_{\mathrm{ori}},y_{\mathrm{ori}},\theta)
    \,\middle\|\,
    p_{\mathrm{dis}}(\cdot;x_{\mathrm{ori}})
\right)
&=
\mathbb{E}_{
    X\sim p_\lambda(\cdot;x_{\mathrm{ori}},y_{\mathrm{ori}},\theta)
}
\left[
    \lambda f(X,y_{\mathrm{ori}};\theta)
    -
    \log M_\lambda(x_{\mathrm{ori}},y_{\mathrm{ori}};\theta)
\right]
\\
&=
\lambda
\frac{\mathrm{d}}{\mathrm{d}\lambda}
\log M_\lambda(x_{\mathrm{ori}},y_{\mathrm{ori}};\theta)
-
\log M_\lambda(x_{\mathrm{ori}},y_{\mathrm{ori}};\theta)
\geq 0.
\end{align*}
The quotient rule therefore gives
\begin{align*}
\frac{\mathrm{d}}{\mathrm{d}\lambda}
\left(
    \frac{
        \log M_\lambda(x_{\mathrm{ori}},y_{\mathrm{ori}};\theta)
    }{
        \lambda
    }
\right)
=
\frac{1}{\lambda^2}
\operatorname{KL}\!\left(
    p_\lambda(\cdot;x_{\mathrm{ori}},y_{\mathrm{ori}},\theta)
    \,\middle\|\,
    p_{\mathrm{dis}}(\cdot;x_{\mathrm{ori}})
\right).
\end{align*}
Continuity at $\lambda=0$ follows from the expansion
\[
\log M_\lambda(x_{\mathrm{ori}},y_{\mathrm{ori}};\theta)
=
\lambda\,
\mathbb{E}_{X\sim p_{\mathrm{dis}}(\cdot;x_{\mathrm{ori}})}
\left[
    f(X,y_{\mathrm{ori}};\theta)
\right]
+
O(\lambda^2).
\]
The integral representation follows from the fundamental theorem
of calculus, with the integrand integrable at $0$ since
$$
\frac{1}{t^2}
\operatorname{KL}\!\left(
    p_t(\cdot;x_{\mathrm{ori}},y_{\mathrm{ori}},\theta)
    \,\middle\|\,
    p_{\mathrm{dis}}(\cdot;x_{\mathrm{ori}})
\right)
\xrightarrow[t\downarrow 0]{}
\frac{1}{2}
\operatorname{Var}_{X\sim p_{\mathrm{dis}}(\cdot;x_{\mathrm{ori}})}
\left[
    f(X,y_{\mathrm{ori}};\theta)
\right].
$$

\medskip
\noindent\textup{(ii)}
Jensen's inequality gives
\[
M_\lambda(x_{\mathrm{ori}},y_{\mathrm{ori}};\theta)
\geq
\exp\!\left(
    \lambda\,
    \mathbb{E}_{X\sim p_{\mathrm{dis}}(\cdot;x_{\mathrm{ori}})}
    \left[
        f(X,y_{\mathrm{ori}};\theta)
    \right]
\right),
\]
and hence
\[
R_\lambda(x_{\mathrm{ori}},y_{\mathrm{ori}};\theta)
\geq
R_0(x_{\mathrm{ori}},y_{\mathrm{ori}};\theta).
\]
From the proof of Theorem~2,
\[
\mathbb{E}_{X\sim p_{\mathrm{dis}}(\cdot;x_{\mathrm{ori}})}
\left[
    c_2 f(X,y_{\mathrm{ori}};\theta)
\right]
\geq
\gamma^*\,
I\!\left[
    p_{\mathrm{dis}}(\cdot;x_{\mathrm{ori}}),s
\right].
\]
Therefore,
\[
I\!\left[
    p_{\mathrm{dis}}(\cdot;x_{\mathrm{ori}}),s
\right]
\leq
\frac{
    c_2 R_0(x_{\mathrm{ori}},y_{\mathrm{ori}};\theta)
}{
    \gamma^*
}
\leq
\frac{
    c_2 R_\lambda(x_{\mathrm{ori}},y_{\mathrm{ori}};\theta)
}{
    \gamma^*
},
\]
and
\begin{align*}
\operatorname{PR}\!\left(
    p_{\mathrm{dis}}(\cdot;x_{\mathrm{ori}}),s
\right)
&=
1-
I\!\left[
    p_{\mathrm{dis}}(\cdot;x_{\mathrm{ori}}),s
\right]
\geq
1-
\frac{
    c_2 R_\lambda(x_{\mathrm{ori}},y_{\mathrm{ori}};\theta)
}{
    \gamma^*
}.
\end{align*}
\end{proof}

\begin{corollary}[Aggregate control of average probabilistic robustness]
\label{thm:corollary11}
For each $y\in\mathcal{Y}$, define the per-label quantities
\begin{align*}
s_y(u)
&:=
\max_{y'\neq y}
\bigl(z_{y'}(u)-z_y(u)\bigr),
\\
E_{\mathrm{adv}}(y)
&:=
\left\{
    u\in\mathcal{X}:s_y(u)\geq 0
\right\},
\\
\gamma^*(y;\theta)
&:=
\inf_{u\in E_{\mathrm{adv}}(y)}
c_2 f(u,y;\theta),
\end{align*}
with the convention $\inf\varnothing=+\infty$.
Assume $\gamma^*(y;\theta)>0$ for every $y\in\mathcal{Y}$.
Since $\mathcal{Y}$ is finite,
\[
\gamma_{\min}^*(\theta)
:=
\min_{y\in\mathcal{Y}}
\gamma^*(y;\theta)
>0.
\]
Define the average probabilistic robustness
\[
\overline{\operatorname{PR}}(\theta)
:=
\mathbb{E}_{(x,y)\sim p_{\mathrm{data}}}
\left[
    \operatorname{PR}\!\left(
        p_{\mathrm{dis}}(\cdot;x),s_y
    \right)
\right].
\]
Then, for every $\beta\in[0,c_2]$,
\[
\overline{\operatorname{PR}}(\theta)
\geq
1-
\frac{
    c_2\exp\bigl(\beta F_{\max}(\theta)\bigr)
}{
    \gamma_{\min}^*(\theta)
}
\mathbb{E}_{(x,y)\sim\tilde p(\cdot,\cdot;\theta)}
\left[
    R_\lambda(x,y;\theta)
\right].
\]
\end{corollary}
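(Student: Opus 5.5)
The plan is to apply Proposition~\ref{thm:prop10}(ii) pointwise in the clean pair $(x,y)$, average over $p_{\mathrm{data}}$, and then change measure from $p_{\mathrm{data}}$ to $\tilde p(\cdot,\cdot;\theta)$ using the bounds on $r$ from Lemma~\ref{thm:lemma6}(iii). The factor $\exp(\beta F_{\max}(\theta))$ in the statement is exactly the price of this change of measure.

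\emph{Step 1 (pointwise bound).} Fix $(x,y)$ and $\lambda=c_2-\beta\in[0,c_2]$. Proposition~\ref{thm:prop10}(ii) was stated for a fixed pair $(x_{\mathrm{ori}},y_{\mathrm{ori}})$, but its proof only uses $\gamma^*$ through the label. It therefore applies verbatim with $s=s_y$, $E_{\mathrm{adv}}=E_{\mathrm{adv}}(y)$ and $\gamma^*=\gamma^*(y;\theta)$. If $E_{\mathrm{adv}}(y)=\varnothing$, then $\operatorname{PR}=1$ and the bound below is trivial; this is consistent with the convention $\inf\varnothing=+\infty$. In either case we get
\[
1-\operatorname{PR}\bigl(p_{\mathrm{dis}}(\cdot;x),s_y\bigr)\le \frac{c_2R_\lambda(x,y;\theta)}{\gamma^*(y;\theta)}\le \frac{c_2R_\lambda(x,y;\theta)}{\gamma^*_{\min}(\theta)}.
\]
The last inequality uses $R_\lambda\ge 0$. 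For $\lambda>0$ this holds because $M_\lambda\ge1$ by Lemma~\ref{thm:lemma6}(i); for $\lambda=0$ it holds because $f\ge0$.

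\emph{Step 2 (average over data).} Taking $\mathbb{E}_{(x,y)\sim p_{\mathrm{data}}}$ gives
\[
1-\overline{\operatorname{PR}}(\theta)\le \frac{c_2}{\gamma^*_{\min}(\theta)}\,\mathbb{E}_{p_{\mathrm{data}}}[R_\lambda].
\]
All quantities are bounded, since $0\le R_\lambda\le F_{\max}(\theta)$ by Lemma~\ref{thm:lemma6}(i), so integrability is not an issue.

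\emph{Step 3 (change of measure).} Write $\bar r:=\mathbb{E}_{p_{\mathrm{data}}}[r]$, so that $\tilde p = p_{\mathrm{data}}\,r/\bar r$ and hence $p_{\mathrm{data}} = \tilde p\,\bar r/r$, using $r>0$. Then
\[
\mathbb{E}_{p_{\mathrm{data}}}[R_\lambda]=\bar r\,\mathbb{E}_{\tilde p}\bigl[R_\lambda/r\bigr]\le 1\cdot \exp\bigl(\beta F_{\max}(\theta)\bigr)\,\mathbb{E}_{\tilde p}[R_\lambda].
\]
Here we use $\bar r\le 1$ and $r\ge \exp(-\beta F_{\max}(\theta))$, both from Lemma~\ref{thm:lemma6}(iii), together with $R_\lambda\ge0$ once more. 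Substituting this into Step 2 yields the claim.

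The main point needing care is Step 1: justifying that Proposition~\ref{thm:prop10} holds uniformly over all pairs with label-dependent $\gamma^*(y;\theta)$, and handling the empty-region case. Everything after that is a direct change-of-measure estimate.
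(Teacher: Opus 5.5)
Your proposal is correct and follows the paper's proof. You apply Proposition~\ref{thm:prop10}(ii) to each pair with $\gamma^*(y;\theta)\ge\gamma^*_{\min}(\theta)$ and $R_\lambda\ge 0$, average over $p_{\mathrm{data}}$, and then trade $p_{\mathrm{data}}$ for $\tilde p$ using $\exp(-\beta F_{\max}(\theta))\le r\le 1$ from Lemma~\ref{thm:lemma6}(iii). The differences are cosmetic: you write the change of measure as $\mathbb{E}_{p_{\mathrm{data}}}[R_\lambda]=\bar r\,\mathbb{E}_{\tilde p}[R_\lambda/r]$, whereas the paper bounds the numerator and denominator of $\mathbb{E}_{p_{\mathrm{data}}}[rR_\lambda]/\mathbb{E}_{p_{\mathrm{data}}}[r]$ separately, and you explicitly handle the empty-region case and $R_0\ge 0$, which the paper leaves implicit.
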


\begin{proof}
Apply Proposition~10(ii) to each clean pair.
Using
\[
\gamma^*(y;\theta)\geq\gamma_{\min}^*(\theta)
\]
and $R_\lambda(x,y;\theta)\geq 0$
(since $M_\lambda(x,y;\theta)\geq 1$),
and taking expectations over $(x,y)\sim p_{\mathrm{data}},
$ gives
\[
\overline{\operatorname{PR}}(\theta)
\geq
1-
\frac{c_2}{\gamma_{\min}^*(\theta)}
\mathbb{E}_{(x,y)\sim p_{\mathrm{data}}}
\left[
    R_\lambda(x,y;\theta)
\right].
\]
By Lemma~\ref{thm:lemma6}(iii),
\[
\exp\bigl(-\beta F_{\max}(\theta)\bigr)
\leq
r(x,y;\theta)
\leq 1.
\]
Hence,
\begin{align*}
\mathbb{E}_{(x,y)\sim\tilde p(\cdot,\cdot;\theta)}
\left[
    R_\lambda(x,y;\theta)
\right]
&=
\frac{
    \mathbb{E}_{(x,y)\sim p_{\mathrm{data}}}
    \left[
        r(x,y;\theta)\,R_\lambda(x,y;\theta)
    \right]
}{
    \mathbb{E}_{(x,y)\sim p_{\mathrm{data}}}
    \left[
        r(x,y;\theta)
    \right]
}
\\
&\geq
\exp\bigl(-\beta F_{\max}(\theta)\bigr)\,
\mathbb{E}_{(x,y)\sim p_{\mathrm{data}}}
\left[
    R_\lambda(x,y;\theta)
\right].
\end{align*}
Equivalently,
\begin{align*}
&\mathbb{E}_{(x,y)\sim p_{\mathrm{data}}}
\left[
    R_\lambda(x,y;\theta)
\right]
\leq
\exp\bigl(\beta F_{\max}(\theta)\bigr)\,
\mathbb{E}_{(x,y)\sim\tilde p(\cdot,\cdot;\theta)}
\left[
    R_\lambda(x,y;\theta)
\right].
\end{align*}
Combining the two bounds completes the proof.
\end{proof}

\section{Further Discussion on the Global Vulnerability Capacity $Z_{vic}$}
\label{app:discussion}

While our theoretical results establish $KL(p_{dis}||p_{vic}) - \log Z_{vic}$ as a surrogate for probabilistic robustness, a deeper understanding of the term $-\log Z_{vic}$ reveals a fundamental trade-off in robust optimization. From the perspective of distribution expansion, the normalization constant $Z_{vic} = \int_{\mathcal{X}} \exp(c_2 f(u, y_{ori}; \theta)) du$ represents the total ``volume'' of the model's vulnerability across the entire input hypercube $\mathcal{X} = [0,1]^d$.

Geometrically, the probability density $p_{vic}$ is a normalized representation of the classification loss. Because the domain $\mathcal{X}$ is compact and the integral of $p_{vic}$ is constrained to $1$, any attempt to minimize the density (i.e., increase robustness) in the local neighborhood of $x_{ori}$ via $KL(p_{dis}||p_{vic})$ must be compensated by an increase in density elsewhere in the domain. This is analogous to the ``water level'' in a closed pool: pushing the water away from one area (the distance distribution) necessarily causes the overall water level ($Z_{vic}$) to rise if the container's volume is fixed.

This perspective implies that $-\log Z_{vic}$ is not merely a mathematical residue but a global regularizer that penalizes the ``compression'' of vulnerabilities. If the optimization focuses solely on the KL term, the model might achieve high local robustness by creating extremely sharp decision boundaries just outside the $p_{dis}$ support, leading to a surge in $Z_{vic}$ and potential global instability. Therefore, PAT's inherent structure forces a principled balance: it requires the model to not only push vulnerabilities away from the clean data but also to suppress the total growth of the global vulnerability volume. This justifies the use of the smoothing factor $\beta$ in Equation \eqref{eq:wemp} as a mechanism to manage this pressure and prevent catastrophic weight collapse during the mass redistribution process.

\section{Details of the Experiments}
\label{app:details}

In this section, we provide the comprehensive experimental settings, hyperparameters, and evaluation protocols required to reproduce the results presented in Section~\ref{sec:exps}.

\subsection{Basic Training Setup}
For all experiments conducted on the CIFAR-10 and CIFAR-100 datasets, we apply a standard data augmentation pipeline to the training set: a 4-pixel reflection padding (\texttt{Pad(4, reflect)}), followed by a random crop to $32 \times 32$ pixels, and a random horizontal flip with a probability of $p=0.5$.

We train all models for a total of $E_{total} = 100$ epochs using a batch size of $B = 256$. The network parameters are optimized using Stochastic Gradient Descent (SGD) with a Nesterov momentum of $0.9$ and a weight decay of $5 \times 10^{-4}$. The initial learning rate is set to $\alpha = 0.01$ and is scheduled using a MultiStepLR decay strategy, which multiplies the learning rate by a decay factor of $\gamma = 0.1$ at milestones of epoch $75$ and epoch $90$.

\subsection{Hyperparameters of Probabilistic Adversarial Training}
For our proposed Probabilistic Adversarial Training (PAT) detailed in Algorithm 2, the generation of the adversarial distribution via Langevin Dynamics requires careful calibration. The precise hyperparameter configurations used across our main experiments are summarized below:

\begin{itemize}
    \item \textbf{Langevin Dynamics Steps ($T$):} $100$ iterations.
    \item \textbf{Langevin Step Size ($\eta$):} $0.3$.
    \item \textbf{Noise Scale ($\sigma$):} $0.001$, which ensures sufficient stochastic exploration within the local neighborhood.
    \item \textbf{Gradient Clipping Threshold ($\rho$):} $1.0$, applied to stabilize the energy landscape exploration.
    \item \textbf{Energy Weights:} We set the distance distribution weight to $c_1 = 0.3$ and the victim distribution (loss) weight to $c_2 = 0.42$.
    \item \textbf{Smoothing Factor ($\beta$):} As extensively discussed in Section 5.4 and Appendix \ref{app:discussion}, we instantiate the smoothing factor as $\beta = 0.001$ to prevent global weight collapse during the Self-Normalized Importance Sampling (SNIS).
\end{itemize}

\subsection{Baselines and Evaluation Protocol}
\textbf{Standard Robustness Evaluation:} To assess standard worst-case robustness, we evaluate the models using the well-established Projected Gradient Descent (PGD) and Carlini-Wagner (CW) attacks. Both attacks are executed for 20 iterative steps (PGD-20 and CW-20). $\epsilon=8/255, \alpha=2/255$. 

\textbf{Probabilistic Robustness Evaluation:} For the Probabilistic Robustness (PR) reported in Table 1, the test-time distance distributions are configured with escalating perturbation scales $\epsilon \in \{0.1, 0.12, 0.15, 0.2\}$. We sample $N=100$ perturbations per image to empirically estimate the PR objective.

\textbf{Baseline Training Settings:} For standard adversarial training baselines such as PGD-AT and TRADES, we strictly follow their original implementations. The baselines generate training adversaries using a 10-step attack with a maximum perturbation bounded by $L_\infty \le 8/255$.

\subsection{Compute Resources}
\label{app:compute}
All experiments, including baseline training and PAT, were implemented in PyTorch. The models were trained and evaluated on a compute cluster equipped with NVIDIA RTX 5090 GPUs. A single complete training run for PAT on CIFAR-10 takes approximately 7 minutes per epoch on a single GPU.

\section{Limitations}
Probabilistic adversarial attacks generate adversarial examples from noise and therefore typically require more generation steps than conventional adversarial attacks. This leads to a higher computational cost compared with standard attack methods.

\end{document}